\definecolor{myGreen}{RGB}{76, 153, 0}    
\definecolor{myRed}{RGB}{165, 42, 42}
\theoremstyle{plain}
\newtheorem{theorem}{Theorem}[section] 
\newtheorem{lemma}[theorem]{Lemma} 
\newtheorem{proposition}[theorem]{Proposition}
\newtheorem{corollary}[theorem]{Corollary}
\theoremstyle{definition}
\theoremstyle{remark}
\newcommand{\fakeparagraph}[1]{\noindent\textbf{#1.}}
\theoremstyle{definition} % 标题黑体，正文非斜体
\newtheorem*{takeaway}{Takeaway}
\definecolor{PromptBar}{RGB}{160,205,155}      % title bar (muted green)
\definecolor{PromptFrame}{RGB}{170,210,165}    % border (light)
\definecolor{PromptBg}{RGB}{245,252,244}       % very light green background
\lstdefinestyle{promptstyle}{
  basicstyle=\ttfamily\footnotesize,
  columns=fullflexible,
  breaklines=true,
  breakatwhitespace=false,
  showstringspaces=false,
  frame=none,
  escapeinside={(*@}{@*)}
}
\title{%\raisebox{-0.4em}{\includegraphics[height=1.5em]{pdf/recycle.pdf}}\ 
Revisiting Judge Decoding from First Principles via Training-Free Distributional Divergence}
\author{
    %Authors
    % All authors must be in the same font size and format.
    Shengyin Sun\textsuperscript{\rm 1}$^*$, Yiming Li\textsuperscript{\rm 2}\thanks{Equal contribution.}, Renxi Liu\textsuperscript{\rm 2}, Weizhe Lin\textsuperscript{\rm 2},\\ 
    \textbf{Hui-Ling Zhen}\textsuperscript{\rm 2}, \textbf{Xianzhi Yu}\textsuperscript{\rm 2}, \textbf{Mingxuan Yuan}\textsuperscript{\rm 2}, \textbf{Chen Ma}\textsuperscript{\rm 1}\thanks{Corresponding author.}\\
    \textsuperscript{\rm 1}City University of Hong Kong
    \textsuperscript{\rm 2}Huawei Technologies \\
    \texttt{shengysun4-c@my.cityu.edu.hk, li.yiming3@huawei.com, chenma@cityu.edu.hk}
    % \texttt{\{zhangxuanyu, yangqing, xudongliang\}@duxiaoman.com} \\
    % Written by AAAI Press Staff\textsuperscript{\rm 1}\thanks{With help from the AAAI Publications Committee.}\\
    % AAAI Style Contributions by Pater Patel Schneider,
    % Sunil Issar,\\
    % J. Scott Penberthy,
    % George Ferguson,
    % Hans Guesgen,
    % Francisco Cruz\equalcontrib,
    % Marc Pujol-Gonzalez\equalcontrib
}
\begin{document}
\maketitle

\begin{abstract}
Judge Decoding accelerates LLM inference by relaxing the strict verification of Speculative Decoding, yet it typically relies on expensive and noisy supervision. In this work, we revisit this paradigm from first principles, revealing that the ``criticality'' scores learned via costly supervision are intrinsically encoded in the draft-target distributional divergence. We theoretically prove a structural correspondence between learned linear judges and Kullback-Leibler (KL) divergence, demonstrating they rely on the same underlying logit primitives. Guided by this, we propose a simple, training-free verification mechanism based on KL divergence. Extensive experiments across reasoning and coding benchmarks show that our method matches or outperforms complex trained judges (e.g., AutoJudge), offering superior robustness to domain shifts and eliminating the supervision bottleneck entirely.
\end{abstract}

\section{Introduction}
\label{ssy1122:intro}

Large Language Models (LLMs) have demonstrated remarkable capabilities in reasoning and generation \citep{openai-o1,deepseekr1,DBLP:arxiv/AaronAbhimanyu24,DBLP:arxiv/yangli25}, yet their deployment is severely constrained by the high latency and memory bandwidth costs of autoregressive decoding \citep{DBLP:conf/mlsys/PopeDouglas23,DBLP:conf/mlsys/IvanovDryden21,DBLP:journal/TIST/NaveedKhan23}. To mitigate this, Speculative Decoding (SD) has emerged as a standard acceleration paradigm \cite{DBLP:conf/icml/leviathankalman23, DBLP:arxiv/chenborgeaud23,DBLP:conf/acl/xiayang24,DBLP:journals/corr/abs-2509-04474}. By leveraging a smaller draft model to propose candidate token sequences that are verified in parallel by the larger target model, SD converts memory-bound sequential generation into compute-bound efficient parallel verification.

Standard SD, however, adheres to a strict lossless criterion. It rejects any draft token that does not strictly align with the target model's distribution. This verification is often overly conservative, rejecting semantically equivalent tokens (e.g., ``6+1=7'' vs. ``6 plus 1 equals 7'') and limiting potential speedups. To address this, recent research has pivoted towards \textit{Judge Decoding} \citep{DBLP:conf/iclr/BachmannAnagnostidis25}, a lossy variant that employs a learned classifier (a ``judge'') to assess the semantic validity of draft tokens. These methods rely on expensive supervision (detailed in Section \ref{ssy1122:revisit_judge_decoding}), including manual annotation~\cite{DBLP:conf/iclr/BachmannAnagnostidis25} or heuristic mining~\cite{DBLP:conf/nips/GaripovVelikonivtsev25,DBLP:journals/corr/abs-2510-02329,DBLP:journals/corr/abs-2505-21600}, to teach a classifier to distinguish between ``critical'' errors and harmless deviations.
\begin{figure}[h]
    \centering
    \includegraphics[width=1.01\linewidth]{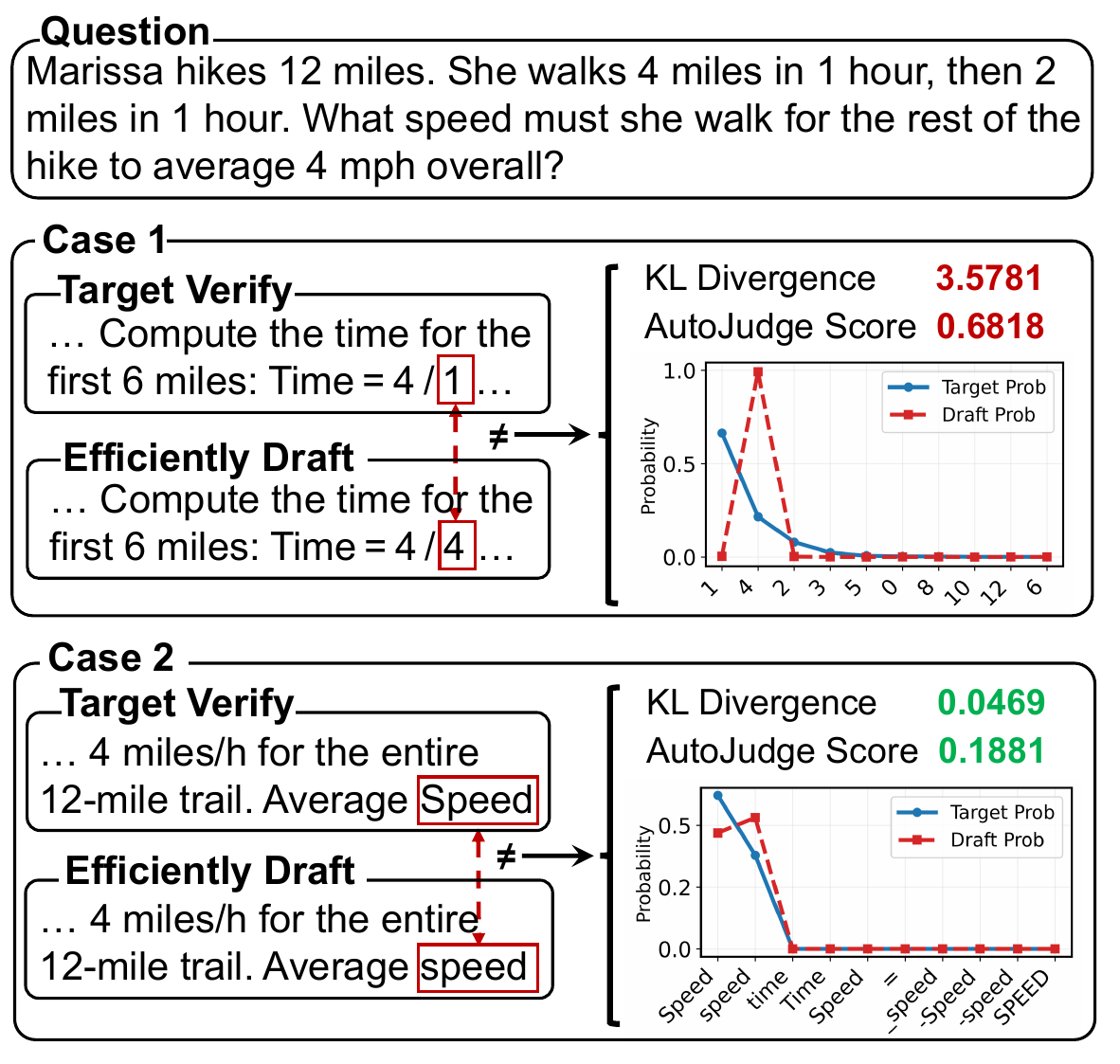}
    \caption{Relationship between AutoJudge score and token-level KL divergence. Higher AutoJudge scores indicate more critical tokens, which coincide with larger KL divergence and stronger target–draft disagreement (Case 1: inconsistent key numbers). Conversely, low scores correspond to small KL divergence and minor, non-semantic deviations (Case 2: capitalization only).}
    \label{ssy1122:relationship_reveal}
\end{figure}

Despite the empirical success of judge decoding, the underlying nature of the ``criticality'' captured by these methods remains opaque. In this work, we investigate the relationship between the learned scoring mechanism of the classifier and the intrinsic distributional statistics of the draft and target models, uncovering a fundamental connection that prior works have overlooked. As illustrated in Figure \ref{ssy1122:relationship_reveal}, taking AutoJudge \citep{DBLP:conf/nips/GaripovVelikonivtsev25} as a representative instance, we observe that its assigned criticality scores exhibit a high degree of alignment with the token-level Kullback-Leibler (KL) divergence. In Case 1, where the draft model introduces a factual error by miscalculating the time (predicting ``$4/4$'' instead of the correct ``$4/1$''), this discrepancy manifests as a sharp distributional shift between the draft and target models, resulting in both a high AutoJudge score and a large KL divergence. Conversely, in Case 2, a non-semantic capitalization difference triggers neither a high judge score nor a significant distributional divergence.\\
\indent These observations motivate a first-principle hypothesis: \textit{the ``criticality'' learned by complex judges is already inherently encoded in the draft–target distributional divergence.} We argue that the scoring function acquired through costly supervision (e.g., in AutoJudge) is essentially a proxy for intrinsic model disagreement. By bridging learned scoring mechanisms with these intrinsic statistics, we propose that a simple, training-free distributional check can effectively substitute for judges that rely on complex supervision signal mining. We ground this proposal theoretically by proving that learned judges and intrinsic divergence share identical logit primitives (linear vs. quadratic), suggesting that the efficacy of expensive supervision can be achieved through purely statistical means while naturally avoiding the generalization fragility of task-specific training. Our contributions are summarized as follows: 
\begin{enumerate}[leftmargin=*, itemsep=0pt, parsep=0pt, topsep=0pt]
    \item We empirically revisit judge decoding and find that supervision-trained criticality scores are strongly aligned with draft--target distributional disagreement at the token level, suggesting the ``criticality'' signal is model-intrinsic.
    \item We theoretically establish a structural correspondence between learned judges and KL divergence, proving that the learned ``criticality'' relies on the same logit primitives as intrinsic divergence metrics.
    \item Guided by our theoretical findings, we propose a training-free verification method. Empirical validation confirms that simple distributional metrics (e.g., KL divergence) effectively replace costly trained classifiers, achieving comparable or superior performance.
\end{enumerate}

\begin{figure*}
    \centering
    \includegraphics[width=1.0\linewidth]{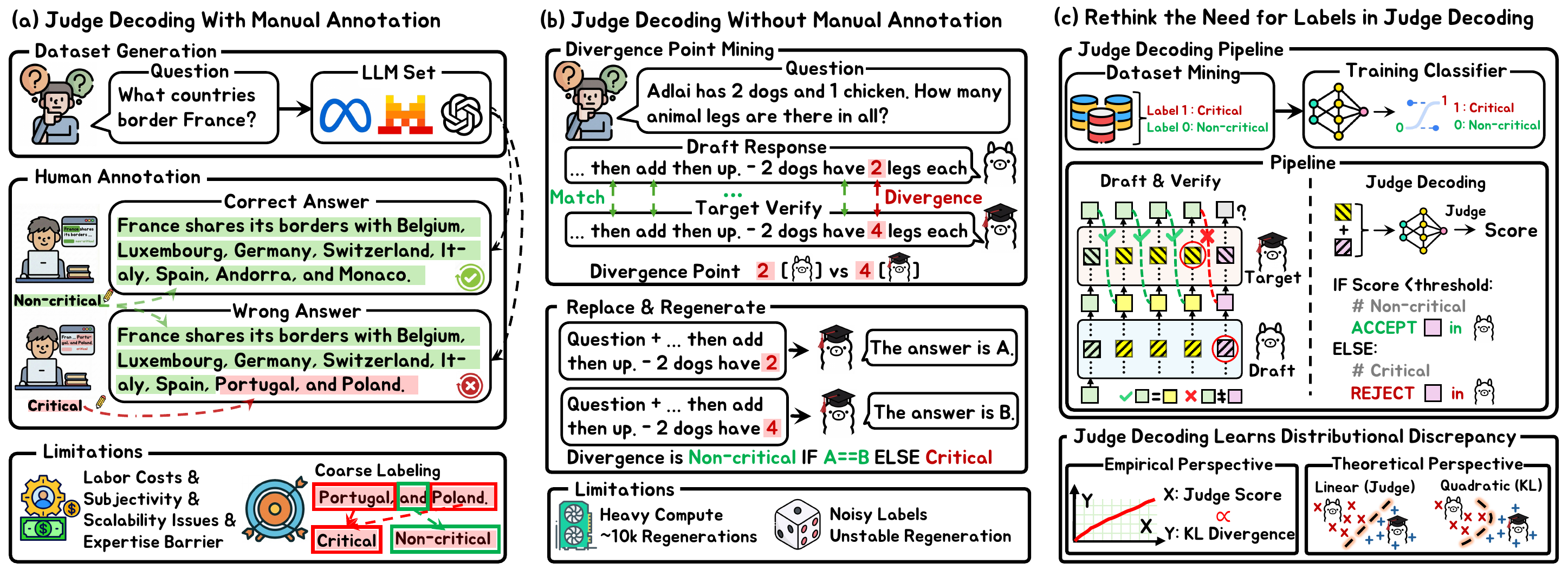}
    \caption{Judge decoding as learning distributional discrepancy. (a) Manual-annotation pipeline. (b) Divergence-point mining without manual labels; (c) We show that a linear judge’s score is empirically correlated with and theoretically connected to distributional divergence (e.g., KL divergence).}
    \label{ssy1122:framework}
\end{figure*}

\section{Background}
\label{ssy1122:background}
To contextualize our approach, we briefly review the paradigm of speculative decoding and its recent evolution towards semantic-level verification, specifically judge decoding.

\subsection{Speculative Decoding}
\label{ssy1122:speculative_decoding}
To accelerate the memory-bound autoregressive inference of LLMs, SD employs a \textit{draft-then-verify} strategy to increase concurrency~\cite{DBLP:arxiv/chenborgeaud23, DBLP:conf/icml/leviathankalman23}.
Instead of generating one token at a time, a smaller \emph{draft} model $\mathcal{M}_D$ first autoregressively produces a candidate block of $\gamma$ tokens, $\mathbf{d}_{t:t+\gamma-1}$, given a prefix $\mathbf{x}_{<t}$. 
The large \emph{target} model $\mathcal{M}_T$ then evaluates all candidates in a single parallel forward pass. 
To ensure the output distribution matches $\mathcal{M}_T$ exactly (lossless decoding), a rejection sampling mechanism is applied: a token $d_{t+i}$ is accepted with probability $\min(1, \frac{P_T(d_{t+i})}{P_D(d_{t+i})})$. 
This paradigm allows the system to generate multiple tokens per expensive model call, effectively converting memory-bound sequential operations into compute-bound parallel verification.

\subsection{Judge Decoding} 
\label{ssy1122:judge_decoding}
While SD guarantees mathematical equivalence to the target distribution, this strict alignment can be overly conservative. For instance, a draft ``6+1=7'' might be rejected if the target model prefers ``6 plus 1 equals 7'', despite their semantic equivalence. This strictness limits the effective speedup.

To address this, recent works have proposed judge decoding~\cite{DBLP:conf/iclr/BachmannAnagnostidis25,DBLP:conf/nips/GaripovVelikonivtsev25}. 
The core motivation stems from the insight that a model's reaction to processing a token reveals more than just softmax probabilities. Specifically, the last hidden layer embeddings of erroneous tokens effectively ``flag'' errors and contradictions, reflecting the model's intrinsic tendency to rectify mistakes immediately~\cite{DBLP:conf/iclr/BachmannAnagnostidis25,DBLP:conf/acl/ServedioBPAN25,DBLP:conf/emnlp/AzariaM23}.
Leveraging this latent error-signaling behavior, judge decoding employs a learnable verifier (or ``judge'') $f_\phi$, typically a lightweight classifier on top of the embeddings $\mathbf{h}$, to assess the validity of the drafted block. 
A candidate is accepted if $f_\phi(\mathbf{h}) > \tau$ for a predefined threshold $\tau$. By shifting from rigid probability comparisons to analyzing these internal validity signals, judge decoding optimizes the trade-off between throughput and generation fidelity, accepting plausibly correct tokens that standard SD would reject. The pipeline of judge decoding is depicted in Figure \ref{ssy1122:framework} (c).
\section{The Bottlenecks of Supervision in Judge Decoding} \label{ssy1122:revisit_judge_decoding}

While judge decoding represents a paradigm shift in speculative decoding by introducing semantic verification, its efficacy is fundamentally constrained by the quality and acquisition cost of the supervision signals used to train the judge. As shown in Figure \ref{ssy1122:framework}, current approaches typically derive supervision either from expensive manual annotation or heuristic model rollouts, inevitably inheriting the generalization fragility of task-specific training. This section critically examines these two paradigms to highlight the trade-offs between annotation cost, label granularity, and signal stability.

\subsection{Judge Decoding With Manual Annotation}

\citet{DBLP:conf/iclr/BachmannAnagnostidis25} introduced the judge decoding framework as a verification paradigm that augments the target model with a lightweight classifier to assess token acceptability. This design effectively highlights the promise of learning correctness-aware verification beyond traditional probability alignment. However, two critical limitations hinder its widespread adoption:
\begin{enumerate}[leftmargin=*, itemsep=0pt, parsep=0pt, topsep=0pt]
    \item \textbf{Costly and Misaligned Supervision.} Training relies on manual annotations of factual or reasoning mistakes. This process is not only prohibitively expensive but also prone to \emph{alignment mismatch}: human annotators judge plausibility based on external knowledge, which may differ from the model's intrinsic ``knowledge boundary''. A token might be factually true but out-of-distribution for the model, or vice versa.

    \item \textbf{Coarse-grained Span Labeling.} Existing schemes typically employ a span-based labeling strategy: once an error occurs, all subsequent tokens are labeled as \textcolor{myRed}{{Critical Tokens}} (negative samples). As shown in Figure \ref{ssy1122:labeled_examples}, this design fails to isolate the \textcolor{myRed}{\underline{Logic-pivoting Token}}—the specific token that triggers the reasoning shift. Once the answer includes ``Portugal'', every subsequent token is labeled as critical, even though only “Portugal” and ``Poland'' are the true logic‑pivoting tokens.

\end{enumerate}

\begin{figure}[htbp]
    \centering
    \includegraphics[width=1.0\linewidth]{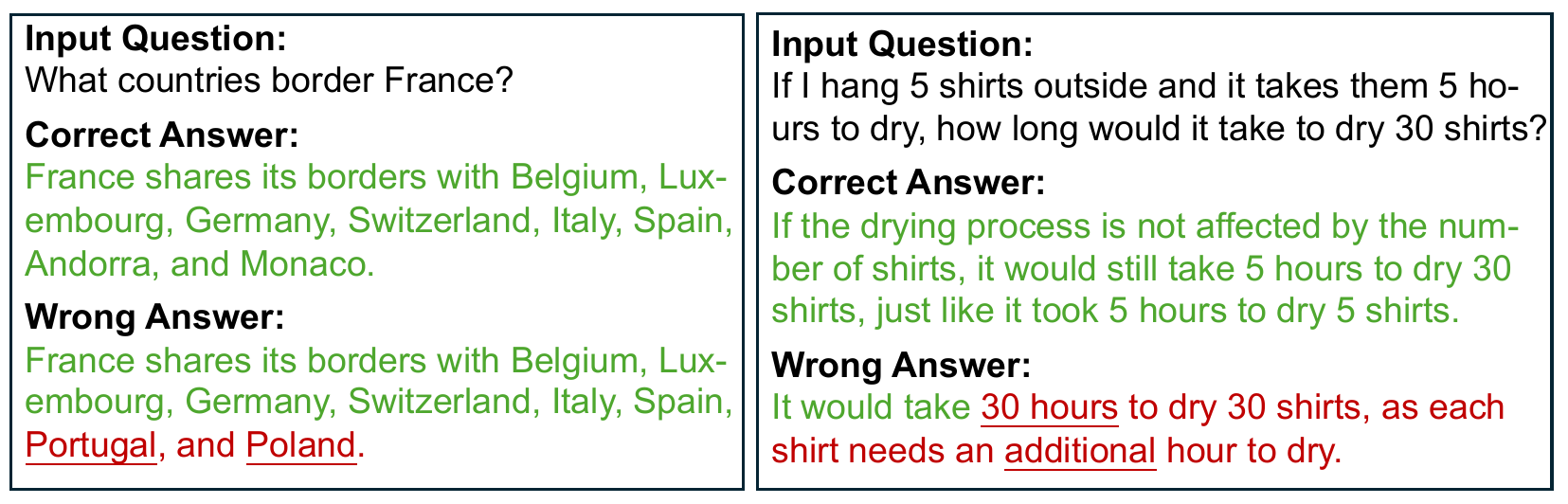}
    \caption{Illustration of overextended labeling boundaries in manual annotation, adapted from \citet{DBLP:conf/iclr/BachmannAnagnostidis25}. Key: \textcolor{myGreen}{{Non-critical Tokens}}, \textcolor{myRed}{{Critical Tokens}}, {\textcolor{myRed}{\underline{Logic-pivoting Tokens}}}. The span-based labeling (\textcolor{myRed}{red}) obscures the true logic-pivoting tokens (\textcolor{myRed}{\underline{red underline}}), creating noisy supervision signals.}
    \label{ssy1122:labeled_examples}
\end{figure}

\begin{figure}[htbp]
    \centering
    \includegraphics[width=1.0\linewidth]{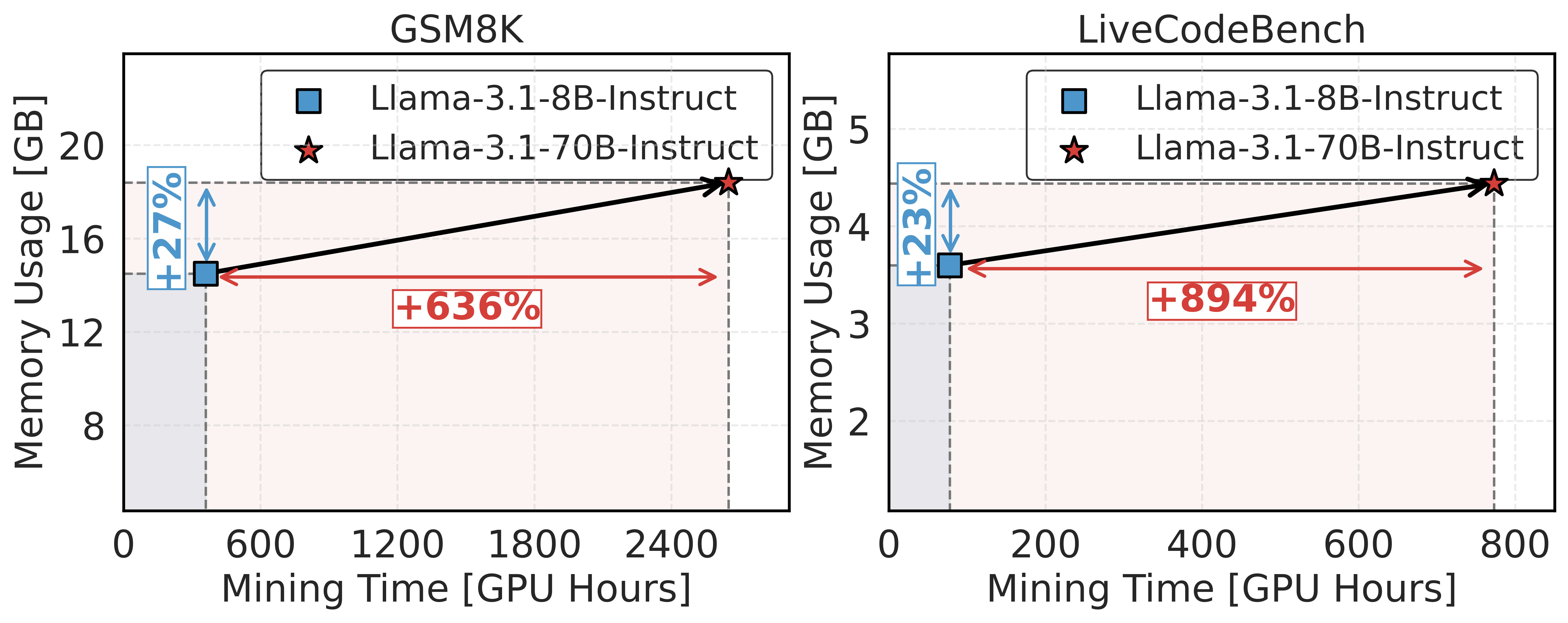}
    \caption{Efficiency analysis of AutoJudge dataset mining. The substantial growth in GPU hours and memory footprint for larger models highlights a significant scalability bottleneck.}
    \label{ssy1122:analysis_running_time}
\end{figure}

\subsection{Judge Decoding Without Manual Annotation}
To mitigate human dependency, \citet{DBLP:conf/nips/GaripovVelikonivtsev25} proposed AutoJudge, which mines critical tokens by checking if a token substitution alters the final answer. While this approach automates data collection, it introduces new computational and statistical challenges:
\begin{enumerate}[leftmargin=*, itemsep=0pt, parsep=0pt, topsep=0pt]
    \item \textbf{ Prohibitive Mining Costs.} AutoJudge necessitates counterfactual rollouts at each divergence point. Figure \ref{ssy1122:analysis_running_time} shows that mining supervision for GSM8K with a 70B model consumes about 2,700 GPU hours (NVIDIA L40). This prohibitive cost of data mining undermines the efficiency benefits of speculative decoding and hinders scalability, implicitly constraining the judge's generalization capabilities due to the difficulty of covering diverse data distributions.
    \item \textbf{Stochastic Instability.} LLM generation is inherently stochastic. A token labeled as ``critical'' in one rollout might be deemed ``non-critical'' in another simply due to inherent non-determinism in the generation process. Figure \ref{ssy1122:consistency_bon} reveals that only 25.4\% of tokens maintain consistent criticality across four trials. This high variance injects significant label noise, causing the judge to learn spurious correlations rather than distinguishing between \emph{causal logic errors} and \emph{aleatoric uncertainty}.
\end{enumerate}

\begin{figure}[htbp]
    \centering
    \includegraphics[width=1.0\linewidth]{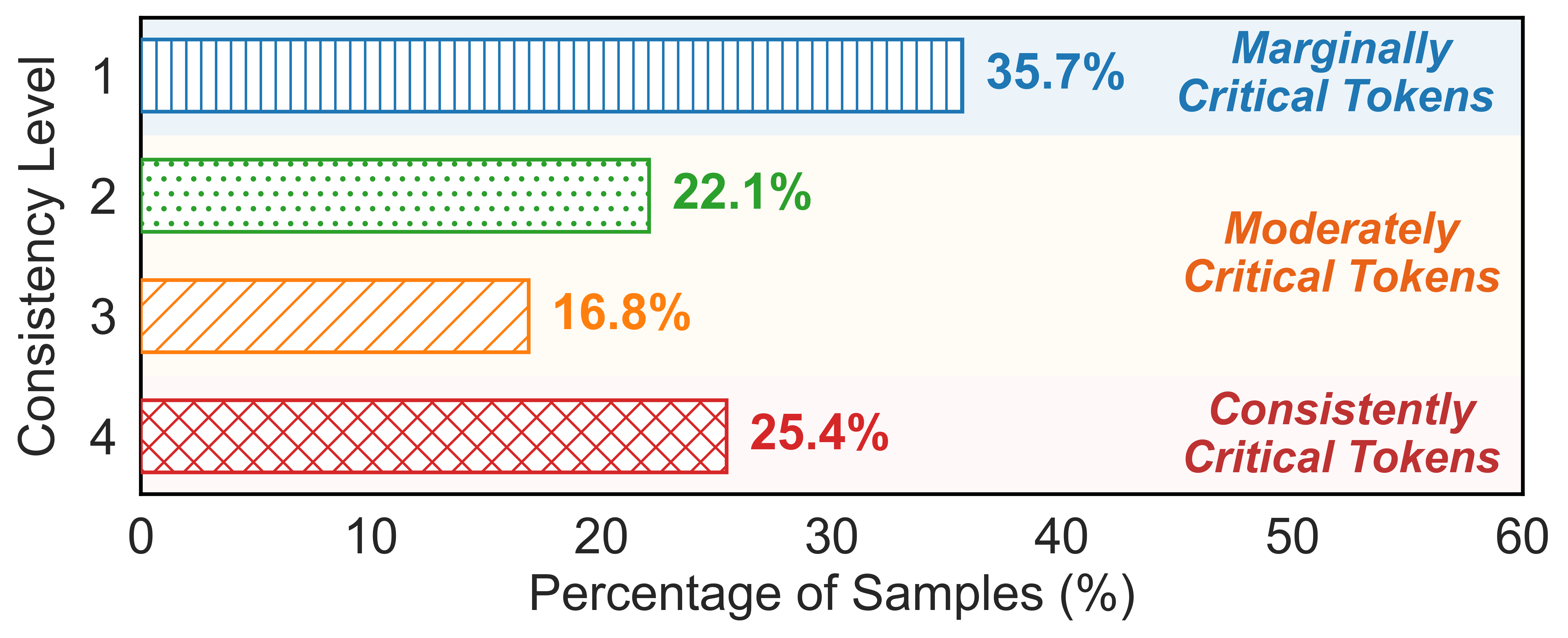}
    \caption{Consistency analysis of critical tokens. The low percentage of consistently critical tokens (Level 4) indicates that heuristic mining is heavily influenced by generation randomness.}
    \label{ssy1122:consistency_bon}
\end{figure}

\noindent\textbf{Summary.} Current paradigms face a dilemma: manual annotation provides stable labels but is labor-intensive and coarse, while heuristic mining offers automation but is plagued by high computational costs and stochastic noise. These limitations underscore the need for an efficient and robust alternative, motivating the following section.
\section{Can We Bypass Data Annotation?}
\label{ssy1122:rethinking_supervision}
The revisiting analysis in Section \ref{ssy1122:revisit_judge_decoding} exposes critical bottlenecks in current paradigms, thereby naturally leading to a key question: \textit{Must the identification of critical tokens rely on external supervision, or can such signals be revealed directly through the models themselves?}

\subsection{Empirical Perspective}
To answer the above question, we compare the ``criticality'' captured by the classifier in AutoJudge against the intrinsic probabilistic states derived from the target distribution $\mathbf{p}_t = \mathcal{M}_T(\mathbf{x}_{<t})$ and the draft distribution $\mathbf{q}_t = \mathcal{M}_D(\mathbf{x}_{<t})$. We quantify these states using the following two metrics: 

\begin{enumerate}[leftmargin=*, itemsep=0pt, parsep=0pt, topsep=0pt]
\item{ 
    \textbf{Uncertainty (Entropy)} quantifies the predictive uncertainty of the model. For any distribution $\pi \in \{\mathbf{p}_t, \mathbf{q}_t\}$, the entropy is defined as:
    \begin{equation}
        H(\pi) = - \sum_{v \in \mathcal{V}} \pi(v) \log \pi(v),
    \end{equation}
    where higher entropy indicates a more dispersed probability distribution over the vocabulary $\mathcal{V}$.}

\item{ 
    \textbf{Disagreement (KL Divergence)} captures the structural deviation between the draft and target reasoning paths via:
    \begin{equation}
        D_{\text{KL}}(\mathbf{p}_t || \mathbf{q}_t) = \sum_{v \in \mathcal{V}} \mathbf{p}_t(v) \log \frac{\mathbf{p}_t(v)}{\mathbf{q}_t(v)},
        \label{eq:kl_divergence}
    \end{equation}
    which quantifies the distributional disagreement between the two models.}
\end{enumerate}

\smallskip
%Using these metrics, we examine the relationship between the output probabilities of the criticality classifier trained in AutoJudge and the intrinsic statistics on the GSM8K. The classifier’s output probabilities (where higher values indicate higher criticality for the final answer) are divided into equally spaced bins, and the average values of the corresponding intrinsic indicators are calculated within each bin to summarize the overall relationship. In Figure \ref{ssy1122:analysis_kl_trend}, the average KL divergence exhibits an overall increasing trend as the classifier assigns higher criticality probabilities, implying that tokens regarded as more influential coincide with regions where the target and draft models diverge most in their predictive distributions.  This alignment between externally trained supervision and the model‑intrinsic distributional signal reveals that the token‑level influence AutoJudge learns to approximate through expensive dataset mining appears to be inherently encoded in the divergence between model probabilities themselves.
Using these metrics, we investigate the correlation between the learned criticality scores from AutoJudge and intrinsic statistics on the GSM8K dataset. Specifically, we stratify the classifier's output probabilities, where higher values indicate greater importance to the final answer, into equally spaced bins and compute the average value of the intrinsic indicators within each bin. As shown in Figure \ref{ssy1122:analysis_kl_trend}, we observe a strong positive correlation: the average KL divergence rises consistently as the classifier assigns higher criticality scores. This implies that influential tokens typically coincide with regions of high distributional disagreement between the target and draft models. Consequently, the token-level influence that AutoJudge learns to approximate via costly supervision appears to be inherently encoded in the divergence between the models' probability distributions.

\begin{figure}[htbp]
    \centering
    \includegraphics[width=1.0\linewidth]{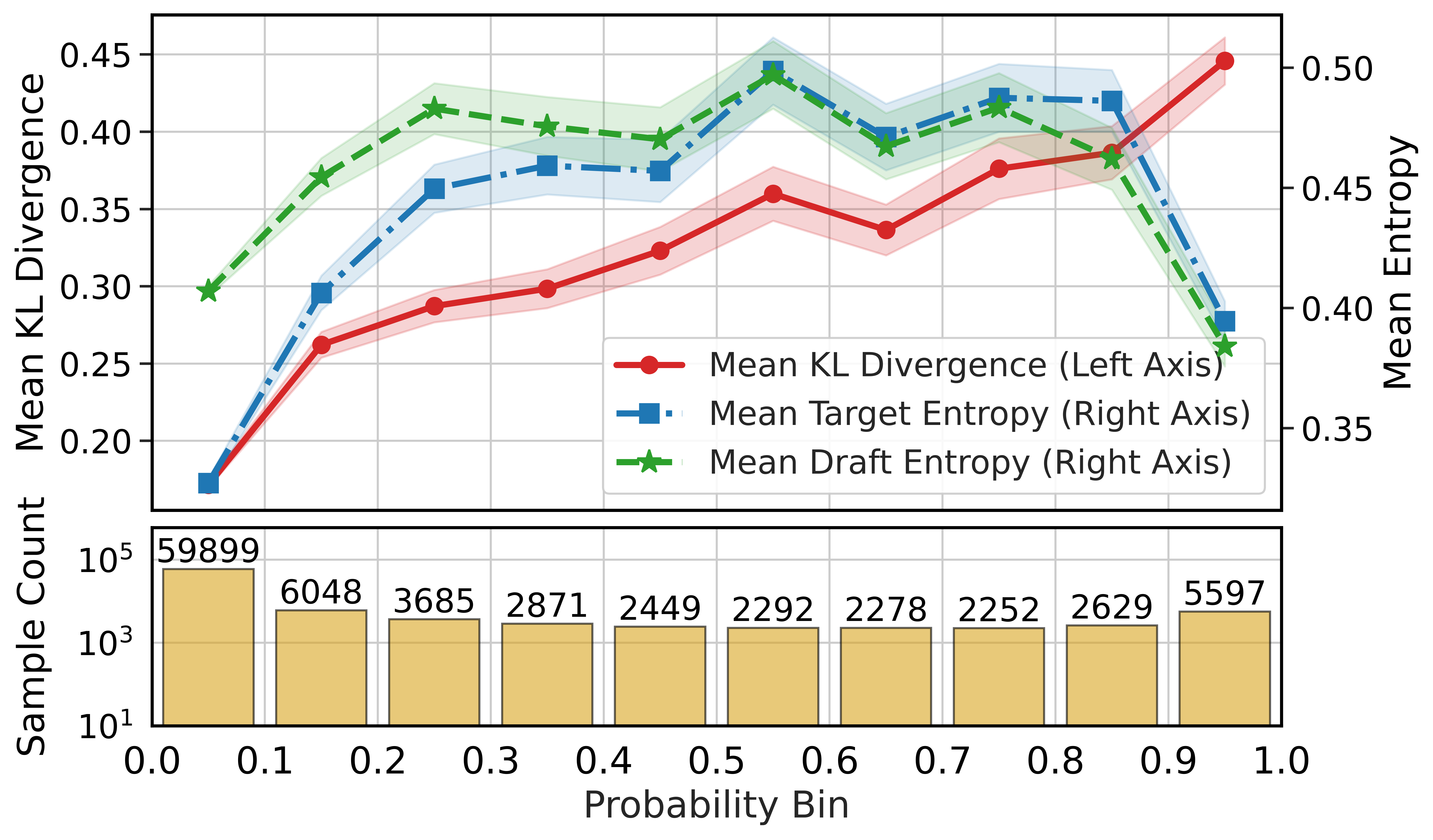}
    %\caption{Analysis of the relationship between supervised criticality scores and intrinsic model statistics on the GSM8K. Samples are grouped into 10 equally spaced bins according to the AutoJudge classifier output probability in $[0,1]$ (x-axis), and bin-wise averages are computed (y-axis). The mean KL divergence (red line) exhibits a consistent upward trend as the predicted criticality probability from the AutoJudge classifier increases. Conversely, the entropy of models (dashed lines) shows no significant correlation with the classifier's output. The lower panel reports the number of samples in each probability bin. \todo{writing issue}}
    \caption{Correlation between supervised criticality scores and intrinsic model statistics on GSM8K. Samples are stratified into 10 bins based on AutoJudge scores (x-axis). \textbf{Top:} The mean KL divergence (red line) shows a clear upward trend as criticality increases, whereas model entropy (dashed lines) shows no significant correlation. \textbf{Bottom:} The sample count distribution across probability bins (i.e., AutoJudge scores).}
    \label{ssy1122:analysis_kl_trend}
\end{figure}

\fakeparagraph{Key Insight}
The observed correlation suggests an intrinsic connection between criticality and the model's predictive uncertainty. Rather than requiring additional supervision, this relationship indicates that the existing probabilistic behavior of the models conveys informative cues for identifying critical tokens. Thus, the KL divergence serves as a training‑free proxy for locating logic pivots. To further enhance robustness in practice, we employ a lightweight confidence mask: when the target model exhibits high certainty (i.e., the top-1 probability exceeds 0.9), the system defaults to standard speculative sampling to ensure precision, reserving the KL-based relaxation specifically for tokens where the model faces genuine uncertainty.

\subsection{Theoretical Perspective}\label{sec:theory_analysis}
This section characterizes the theoretical alignment between KL-based thresholding and the AutoJudge linear classifier. We first summarize the main intuition in the takeaway below, and then formalize it in Theorem~\ref{ssy1122:structural_correspondence}.

\begin{takeaway}{
AutoJudge and KL-based thresholding rely on the same signal, namely how much the target model shifts the draft model’s relative token preferences, expressed via $\Delta_{ij}(x)$ (defined in Eq.~\eqref{ssy1122:formulation_shift}).
AutoJudge learns a \emph{linear} decision rule over these shifts, while the KL divergence provides a \emph{quadratic} aggregation of their overall magnitude. As a result, KL-based thresholding can act as a proxy for the same ``logic pivots'' without dataset mining.}
\end{takeaway}

To facilitate the following analysis, we first introduce some necessary notation. Let $x=[h_t;h_d]$ be the concatenated feature vector, where $h_t, h_d \in \mathbb{R}^d$ denote the hidden states of the target and draft models. The corresponding logits are given by $z_t = W_t h_t + b_t$ and $z_d = W_d h_d + b_d$, yielding the output distributions $P_t=\mathrm{softmax}(z_t)$ and $P_d=\mathrm{softmax}(z_d)$. The theoretical link between the KL measure and the trained classifier is the pairwise logit-gap difference $\Delta_{ij}(x)$, defined for any vocabulary indices $(i,j)$ as
\begin{equation}
\label{ssy1122:formulation_shift}
    \Delta_{ij}(x):=(z_t(i)-z_t(j))-(z_d(i)-z_d(j)),
\end{equation}
a fundamental linear primitive that captures the shift in relative preference between tokens when transitioning from the draft to the target model.
\begin{figure*}[h]
    \centering
    \includegraphics[width=0.497\linewidth]{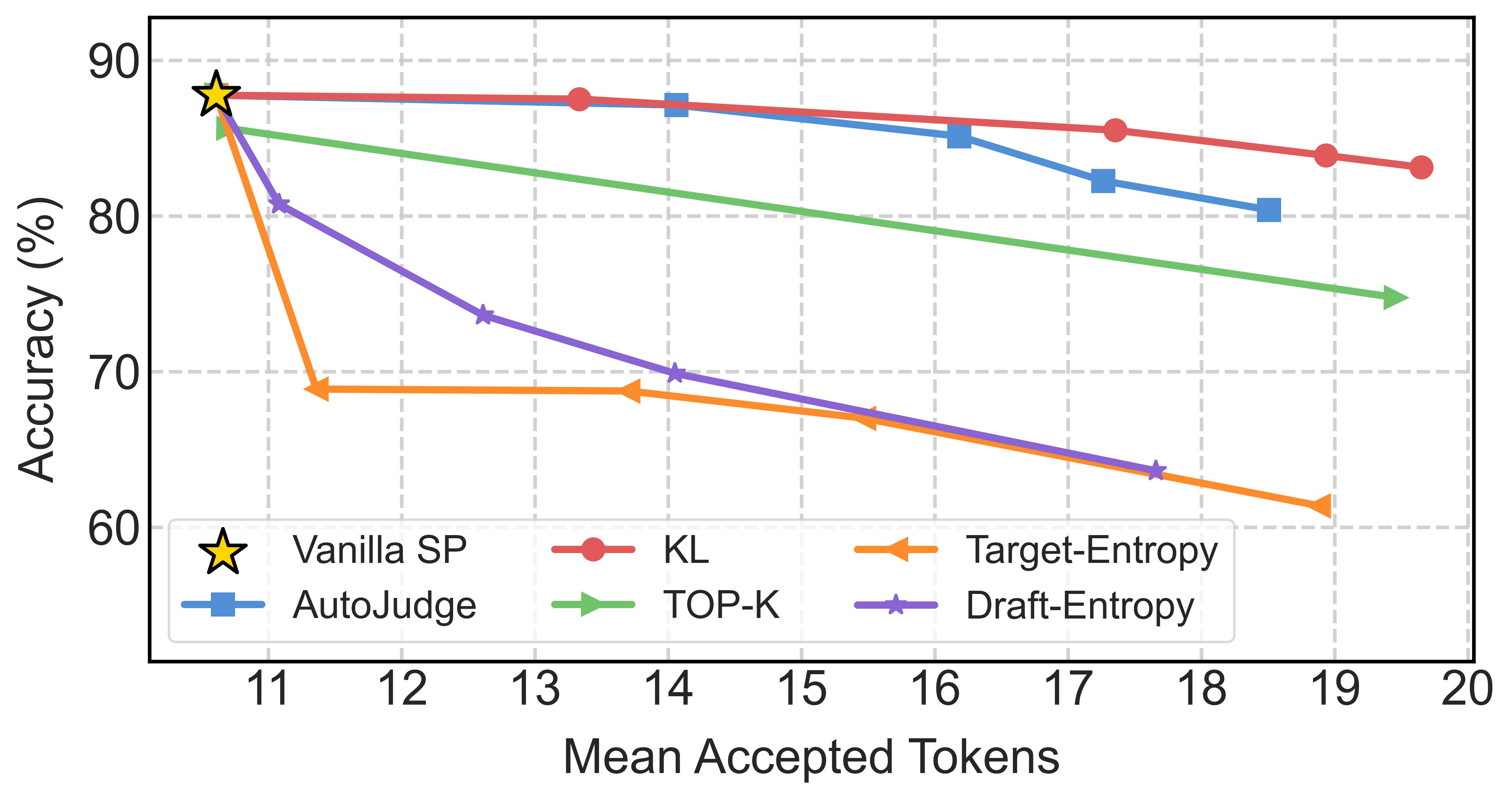}
    \includegraphics[width=0.497\linewidth]{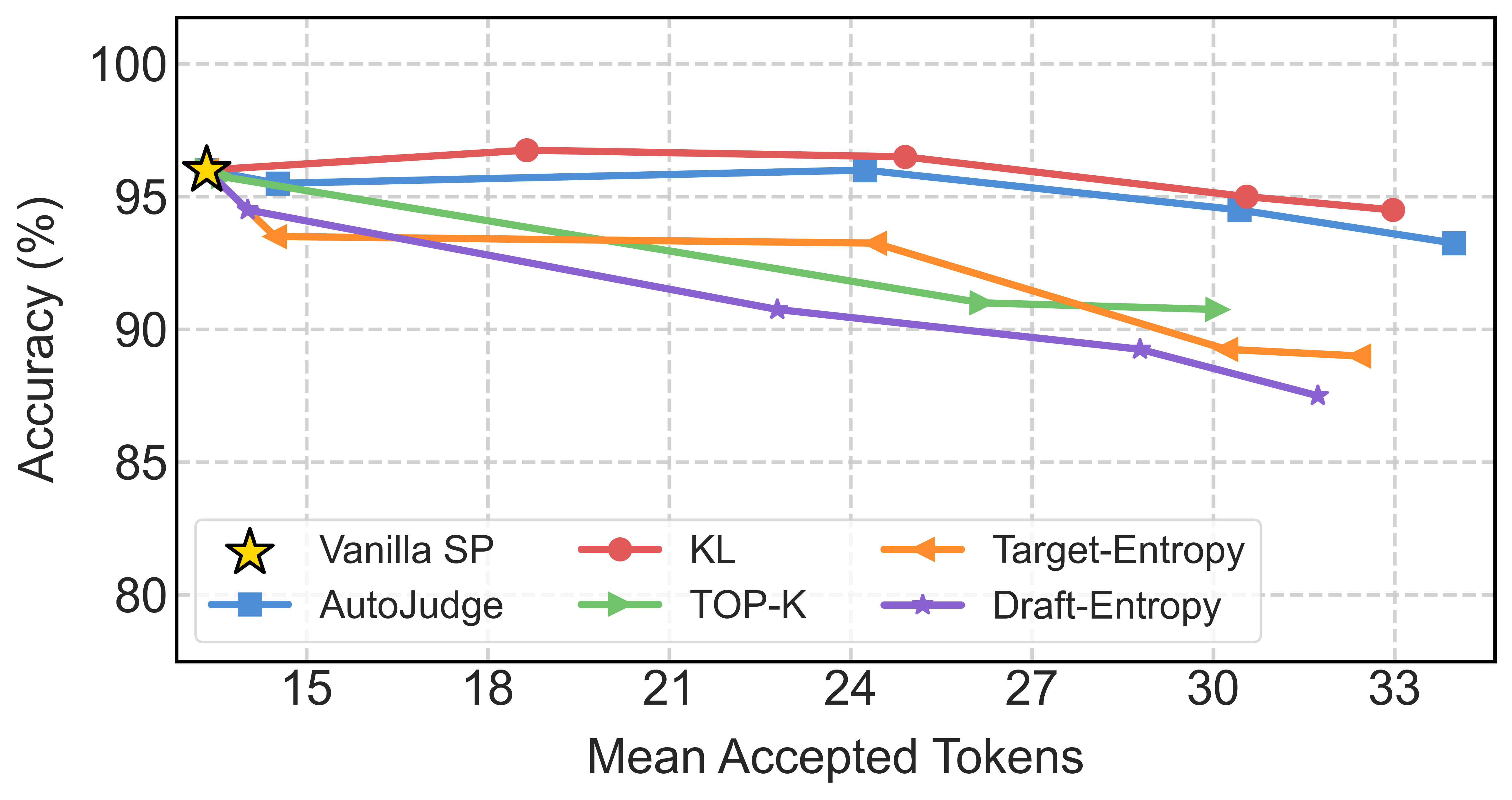}
    % \vspace{-6pt}
    \caption{Accuracy and MAT on GSM8K for Llama-3.2-1B-Instruct/Llama-3.1-8B-Instruct  (left) and Llama-3.1-8B-Instruct/Llama-3.1-70B-Instruct (right).}
    \label{ssy1122:performace_gsm8k}
\end{figure*}
% \vspace{-10pt}
\begin{figure*}[h]
    \centering
    \includegraphics[width=0.497\linewidth]{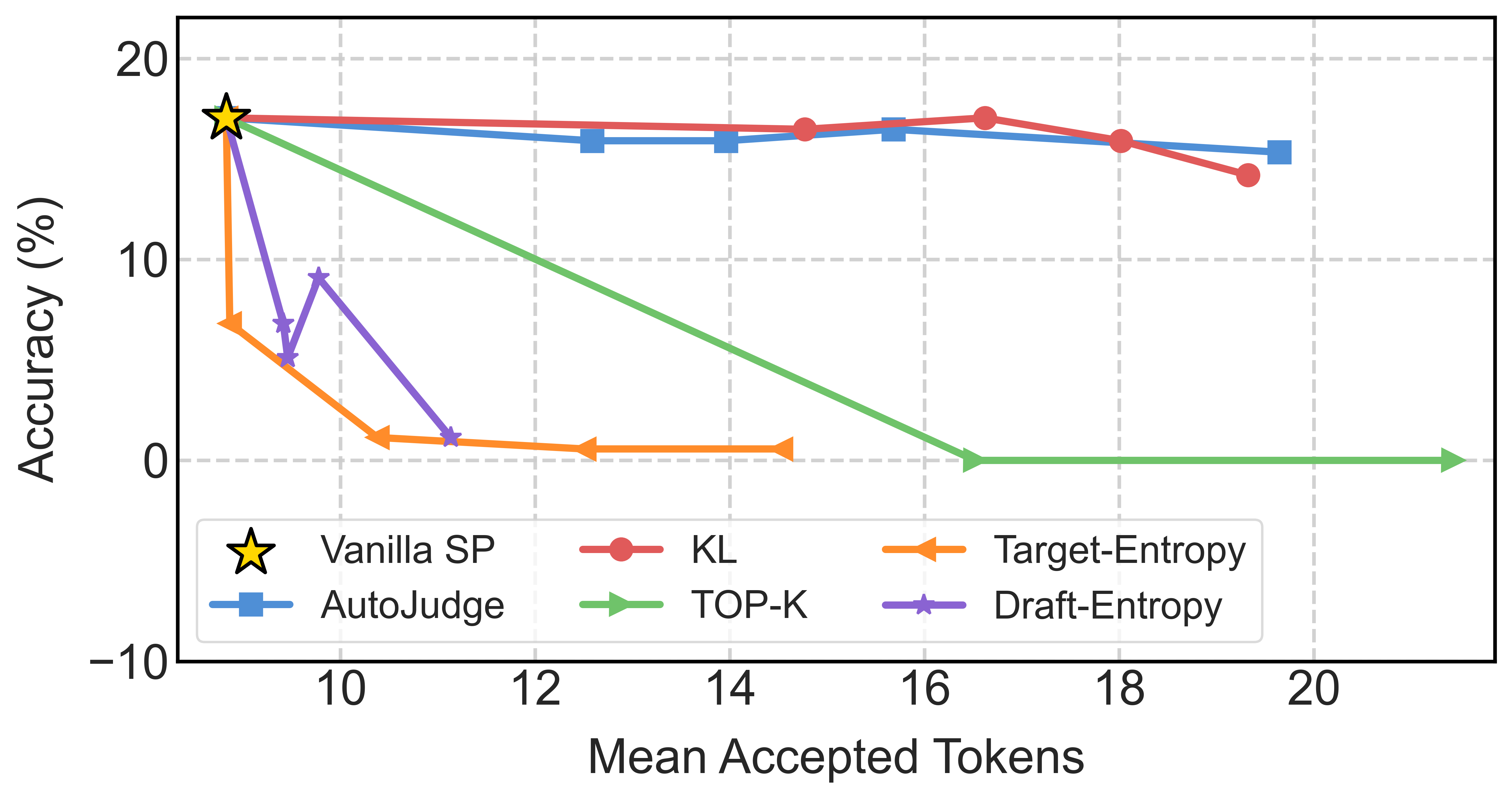}
    \includegraphics[width=0.497\linewidth]{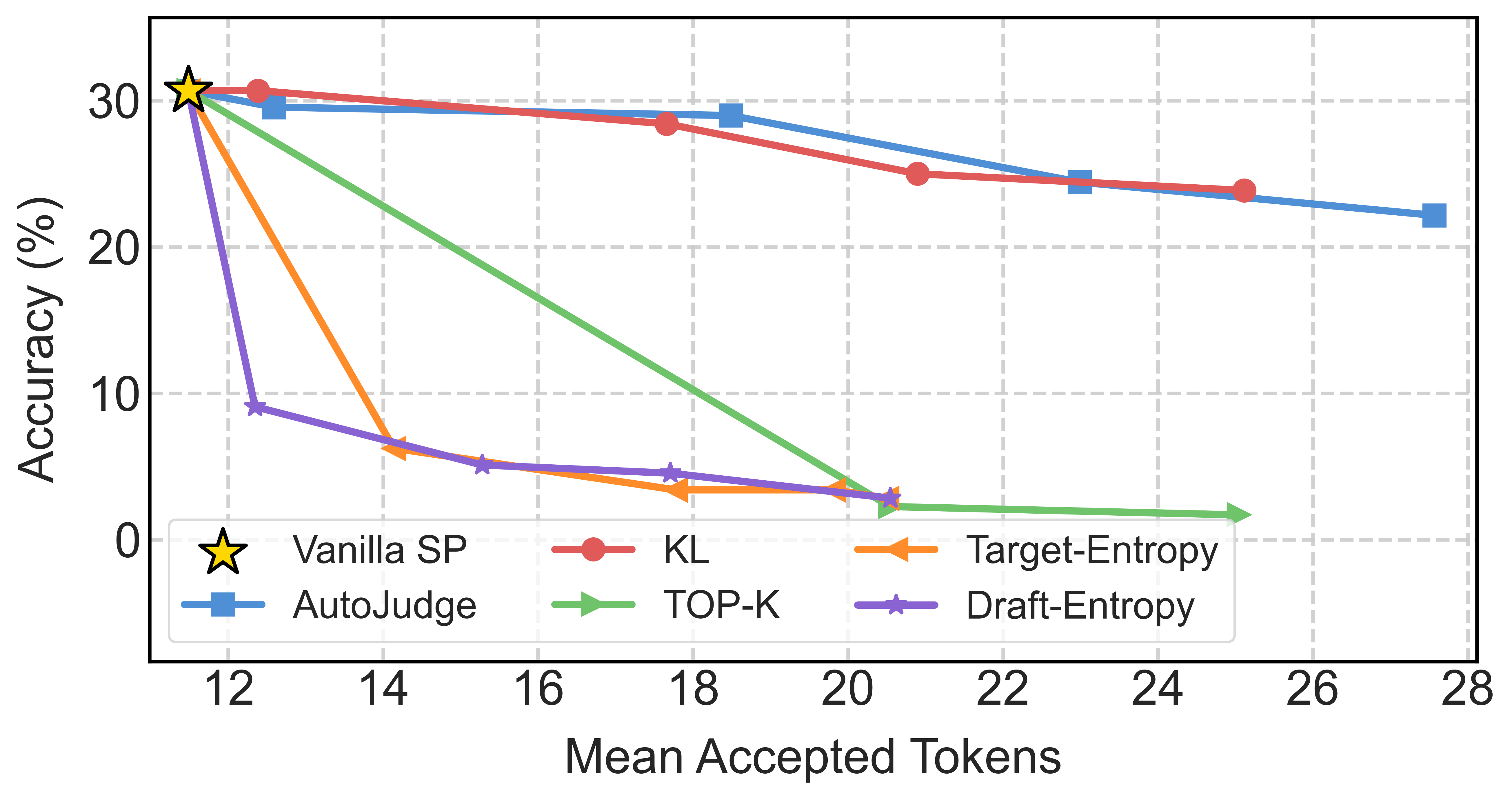}
    % \vspace{-6pt}
    \caption{Accuracy and MAT on LiveCodeBench for Llama-3.2-1B-Instruct/Llama-3.1-8B-Instruct (left)  and Llama-3.1-8B-Instruct/Llama-3.1-70B-Instruct (right).}
    \label{ssy1122:performace_lcb}
\end{figure*}
\begin{theorem}[Structural Correspondence of KL and Linear Classifiers]
\label{ssy1122:structural_correspondence}
The empirical alignment between KL-based thresholding and the trained linear classification stems from their shared dependence on the primitives $\Delta_{ij}(x)$: (i) \textbf{KL Divergence as Quadratic Aggregation:} Under a second-order expansion, the KL divergence acts as a weighted quadratic sum of the primitives $\Delta_{ij}(x)$. In this view, KL-based thresholding defines a quadratic decision boundary in the primitive space, sensing the total magnitude of distribution shifts. (ii) \textbf{Trained Classifier as Linear Partitioning:} The linear classifier identifies critical tokens by defining a linear decision surface in the space of primitives $\Delta_{ij}(x)$, partitioning the feature space based on whether token mismatches lead to a deviation in output quality.
\end{theorem}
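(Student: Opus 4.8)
The proof splits along the two claims, and in both cases the strategy is the same: reduce the object of interest to an explicit function of the primitives $\Delta_{ij}(x)$ and read off the degree of the resulting decision surface. For part (i), the plan is to Taylor-expand $D_{\mathrm{KL}}(P_t\|P_d)$ in the logit shift. Because softmax is invariant to a common additive constant, $D_{\mathrm{KL}}(P_t\|P_d)$ depends on $z_t$ only through $\delta:=z_t-z_d$; set $g(\delta):=D_{\mathrm{KL}}(\mathrm{softmax}(z_d+\delta)\,\|\,\mathrm{softmax}(z_d))$. I would first record $g(0)=0$ and $\nabla g(0)=0$ (the divergence is minimized when the two distributions coincide), so the leading behaviour is quadratic, and then compute the Hessian at the origin, which is the Fisher information of the categorical family in the logit parametrization, $\nabla^2 g(0)=\mathrm{diag}(p)-pp^\top$ with $p:=P_d$ (equivalently $P_t$, up to higher order). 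This gives $g(\delta)=\tfrac12\,\delta^\top(\mathrm{diag}(p)-pp^\top)\delta+O(\|\delta\|^3)$. The last step is the elementary identity $v^\top(\mathrm{diag}(p)-pp^\top)v=\mathrm{Var}_{i\sim p}(v_i)=\tfrac12\sum_{i,j}p_ip_j(v_i-v_j)^2$ applied with $v=\delta$: since $\delta_i-\delta_j=\Delta_{ij}(x)$, it yields $D_{\mathrm{KL}}(P_t\|P_d)=\tfrac14\sum_{i,j}p_ip_j\,\Delta_{ij}(x)^2+O(\|\delta\|^3)$, a $p$-weighted quadratic form in the primitives; hence the rule $D_{\mathrm{KL}}>\kappa$ is, to leading order, a quadric (ellipsoidal) boundary in $\Delta$-space, which is claim (i).

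For part (ii), the plan is to rewrite the AutoJudge score in those same coordinates. Starting from the linear judge $f_\phi(x)=w_t^\top h_t+w_d^\top h_d+c$ and using that the unembeddings have full column rank (so $W_t,W_d$ have left inverses and $h_t,h_d$ are affine images of $z_t,z_d$), I would first re-express $f_\phi$ as an affine function of the logit pair, $f_\phi(x)=\tilde w_t^\top z_t+\tilde w_d^\top z_d+c'$. Next I would split $\tilde w_t$ and $\tilde w_d$ along $\mathbf 1$ and along the contrast subspace $\{\mathbf 1\}^\perp=\mathrm{span}\{e_i-e_j\}$: the contrast parts are exactly what drives the output distributions, and on the slice where $z_d$ (equivalently $P_d$) is held fixed, writing the contrast part of $\tilde w_t$ as $\sum_{i<j}a_{ij}(e_i-e_j)$ and substituting $z_t(i)-z_t(j)=\Delta_{ij}(x)+(z_d(i)-z_d(j))$ turns $f_\phi$ into $\sum_{i<j}a_{ij}\,\Delta_{ij}(x)$ plus a constant (the $z_d$-dependent terms are frozen) and a residual proportional to the distribution-irrelevant coordinate $\sum_k z_t(k)$. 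Provided that residual is absent --- which I would justify, or assume, by noting that the supervision only ever constrains output distributions, never the raw logit scale --- the set $\{f_\phi>\tau\}$ is an affine hyperplane in the coordinates $\Delta_{ij}(x)$, a linear partition, which placed next to the quadric of part (i) is precisely the asserted structural correspondence: AutoJudge and KL-thresholding consume the same linear primitives $\Delta_{ij}$, the former at degree one and the latter at degree two.

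The main obstacle is controlling the second-order truncation in part (i): the quadratic picture is faithful only when $\|\delta\|$ is moderate, i.e.\ when draft and target do not disagree catastrophically, so a clean statement needs either an explicit remainder bound (via the third derivative of the log-partition, which stays bounded because $p$ is a probability vector) or an a priori restriction to the ``genuine uncertainty'' regime already carved out by the paper's top-$1$ confidence mask. The secondary subtlety is the full-column-rank/alignment assumption on the unembeddings in part (ii) together with the need for the learned weights to carry no mass on the common-shift direction $\mathbf 1$; without a shared unembedding one has to argue this from the fact that the training signal is invariant to the log-partition, so that the reduction to $\Delta$-space is lossless for the classifier's decision. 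These are exactly the points at which the argument passes from an exact identity to a first-order structural statement, which is the scope the theorem claims.
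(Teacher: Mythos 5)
Your part (i) is essentially the paper's own argument: a second-order expansion of $D_{\mathrm{KL}}(P_t\Vert P_d)$ in $\delta=z_t-z_d$ with Hessian $F(P_d)=\mathrm{diag}(P_d)-P_dP_d^\top$ (the paper routes this through the Bregman/log-sum-exp identity, you through vanishing value and gradient at $\delta=0$, which is cosmetic), followed by the same variance identity $\delta^\top F(p)\delta=\tfrac12\sum_{i,j}p_ip_j(\delta_i-\delta_j)^2$, giving $D_{\mathrm{KL}}\approx\tfrac14\sum_{i,j}P_d(i)P_d(j)\,\Delta_{ij}(x)^2$; your remark about controlling the truncation matches the paper's restriction to accepted prefixes where $\|\delta\|$ is small.

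For part (ii) there is a genuine gap. The paper never passes to logit coordinates: it shows $\Delta_{ij}(x)=a_{ij}^\top x+\kappa_{ij}$ with $a_{ij}=[\,W_t^\top(e_i-e_j);\,-W_d^\top(e_i-e_j)\,]$, argues that because $V\gg d$ these directions form an overcomplete frame spanning $\mathbb{R}^{2d}$, writes the learned weight exactly as $w=\sum_{i,j}\beta_{ij}a_{ij}$, and hence gets $\mathrm{Cls}(x)=\sigma\bigl(\sum_{i,j}\beta_{ij}\Delta_{ij}(x)+b'\bigr)$ globally, with a genuine constant bias and no extra hypotheses on the unembeddings or on the weights. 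Your reduction $f_\phi=\tilde w_t^\top z_t+\tilde w_d^\top z_d+c'$ instead leaves behind the term $(\tilde w_t+\tilde w_d)^\top z_d$ (plus the component of $\tilde w_t$ along $\mathbf{1}$), which is \emph{not} a function of the primitives $\delta_i-\delta_j$; ``freezing'' $z_d$ only yields a family of slice-wise hyperplanes with a draft-dependent offset, which is weaker than the claimed single linear decision surface in primitive space, and your two auxiliary assumptions (left-invertible $W_t,W_d$; no weight mass on $\mathbf{1}$) are acknowledged but never discharged. The point your logit-space parametrization hides is exactly what makes the paper's argument work: viewed as affine functions of the concatenated state $x$, the primitives are overcomplete, so even a functional depending only on $h_d$ lies in the span of the $a_{ij}$ (one solves $W_t^\top v=0$, $-W_d^\top v=w_d$ with $v\perp\mathbf{1}$, generically possible since $V-1\ge 2d$); hence the residual you must assume away simply does not arise in the paper's coordinates. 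Separately, the paper's appendix also links the discrete top-$k$ supervision signal to a boundary-crossing primitive and thereby to a KL lower bound (its Propositions on top-$k$ mismatch), supporting material that your proposal does not attempt, though it is not strictly required by the two claims of the theorem.
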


\begin{proof}
See Appendix \ref{sec:a0_setup} for details.
\end{proof}

Theorem \ref{ssy1122:structural_correspondence} attributes the stable agreement between both methods to their shared foundation in the linear primitives $\Delta_{ij}(x)$, differing only in whether they employ quadratic or linear geometry to distinguish token criticality.

\section{Experiment}\label{sec:experiments}
\begin{figure*}
    \centering
    \includegraphics[width=0.497\linewidth]{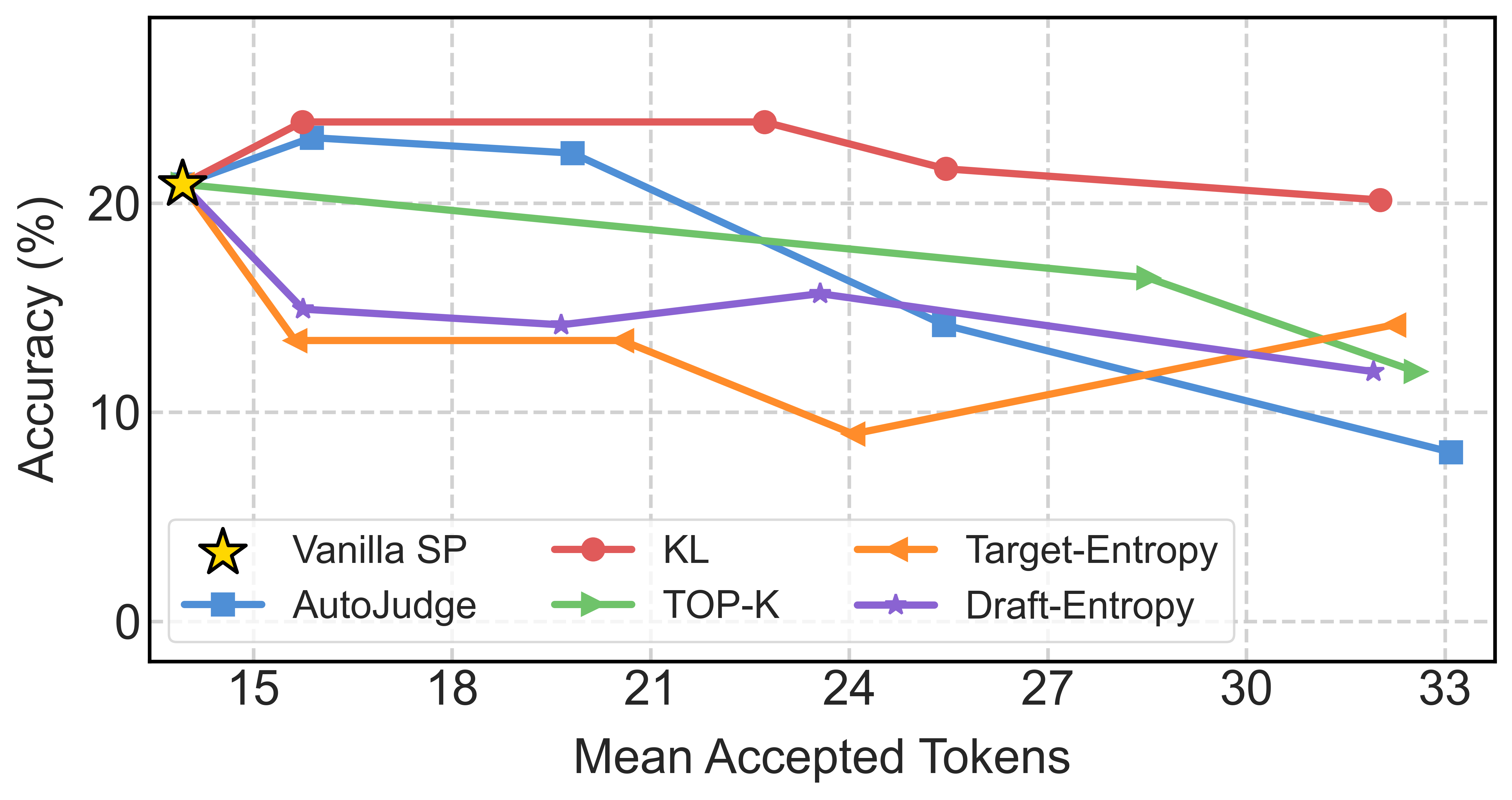}
    \includegraphics[width=0.497\linewidth]{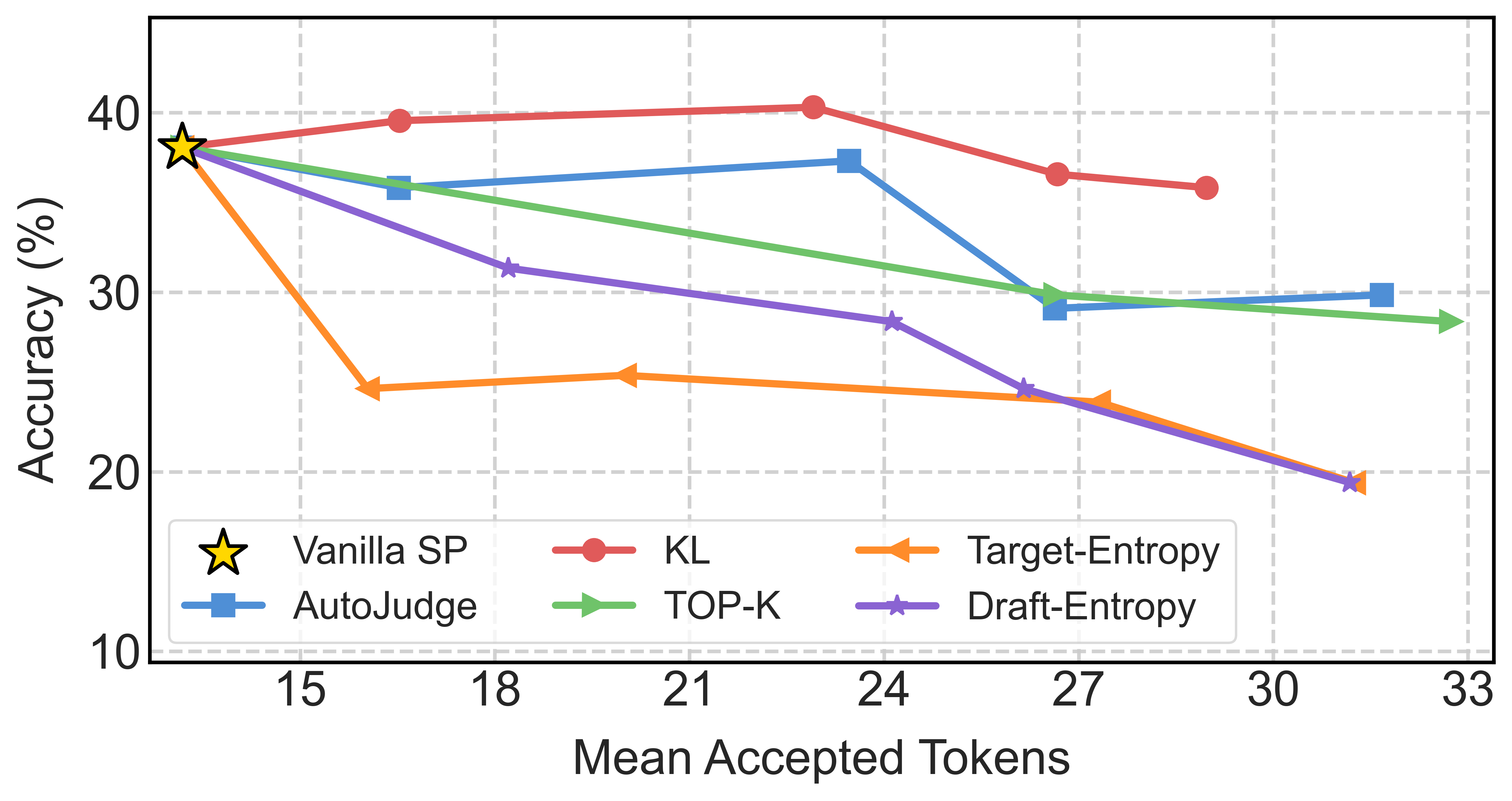}
    % \vspace{-6pt}
    \caption{Accuracy and MAT on MATH-500-Hard for Llama-3.2-1B-Instruct/Llama-3.1-8B-Instruct (left) and Llama-3.1-8B-Instruct/Llama-3.1-70B-Instruct (right).}
    \label{ssy1122:performace_math500}
\end{figure*}
% \vspace{-10pt}
\begin{figure*}
    \centering
    \includegraphics[width=0.497\linewidth]{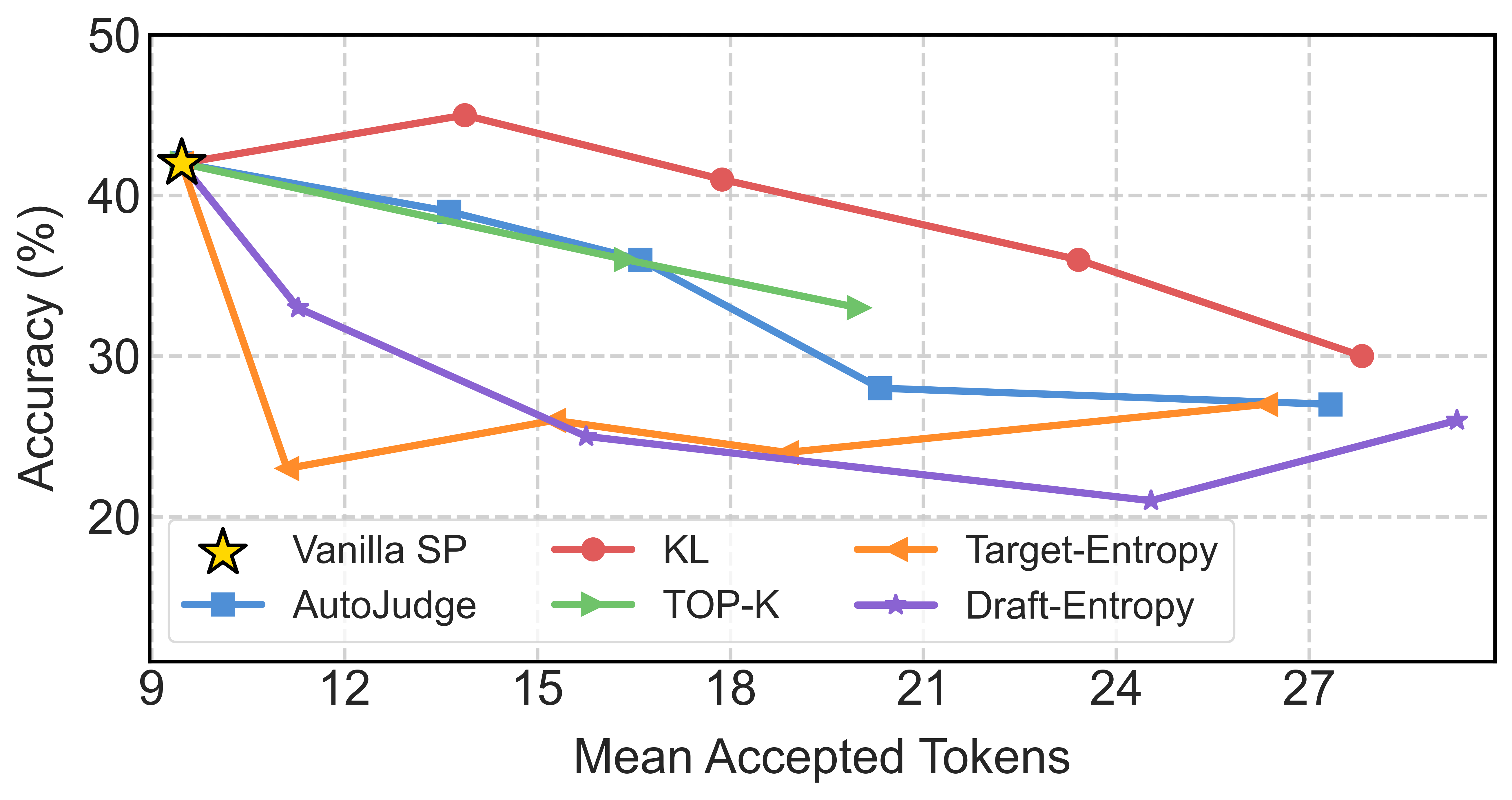}
    \includegraphics[width=0.497\linewidth]{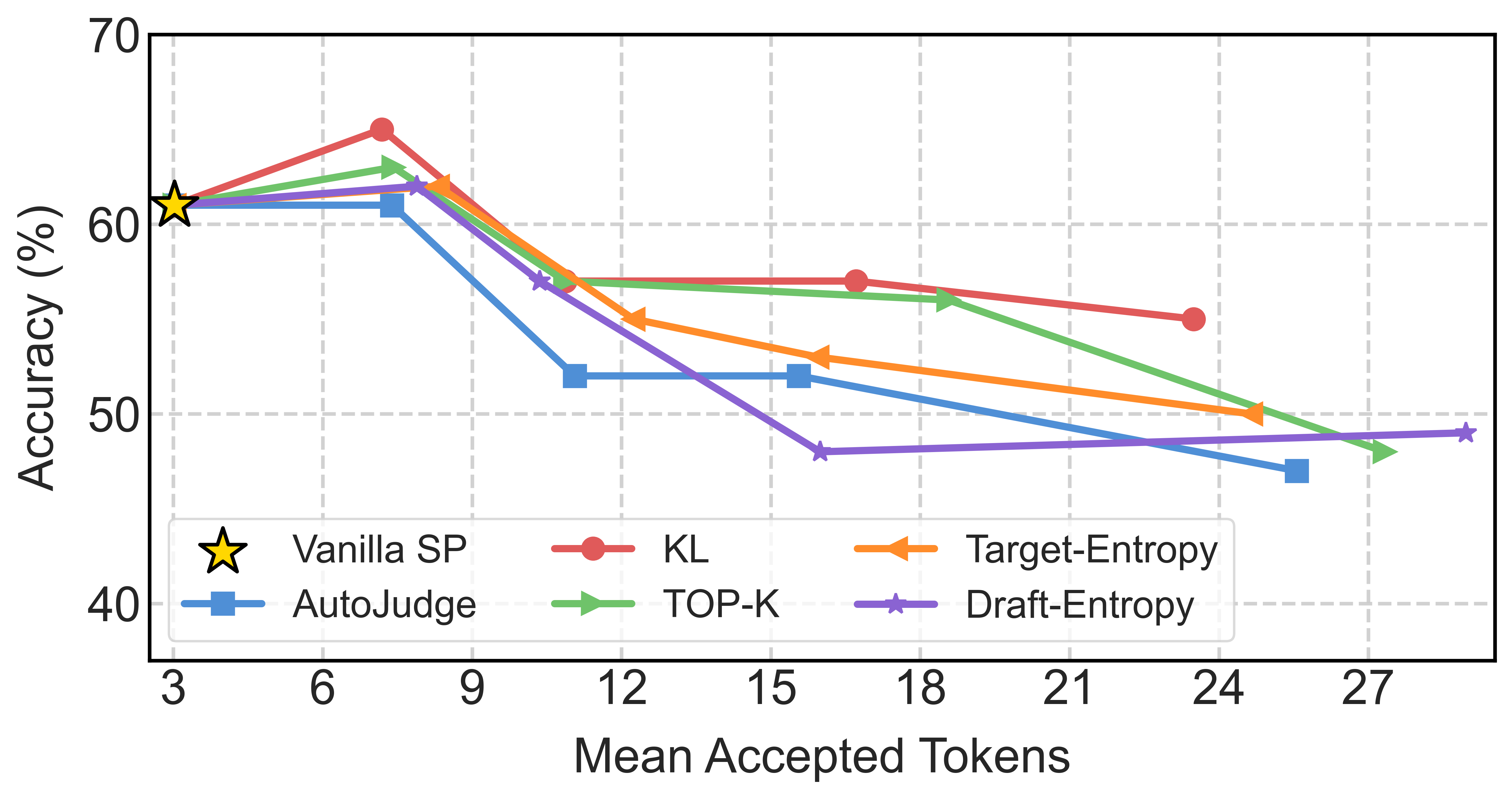}
    % \vspace{-6pt}
    \caption{Accuracy and MAT on MMLU-Pro for Llama-3.2-1B-Instruct/Llama-3.1-8B-Instruct (left) and Llama-3.1-8B-Instruct/Llama-3.1-70B-Instruct (right).}
    \label{ssy1122:performace_mmlupro}
% \vspace{-8pt}
\end{figure*}

\subsection{Experimental Setup and Baselines}

\paragraph{Benchmarks and Models.}
We adopt standard speculative decoding protocols \citep{DBLP:conf/nips/GaripovVelikonivtsev25} across four benchmarks: GSM8K \citep{DBLP:arxiv/CobbeKosaraju21} and MATH-500-Hard \citep{DBLP:conf/nips/HendrycksBurns21} for mathematics, LiveCodeBench \citep{DBLP:conf/iclr/JainHan25} for coding, and MMLU-Pro \citep{DBLP:conf/nips/WangMa24} for comprehensive knowledge.
Our primary model pairs are Llama‑3.2‑1B-Instruct/Llama‑3.1‑8B-Instruct and Llama‑3.1‑8B-Instruct/Llama‑3.1‑70B-Instruct \citep{DBLP:arxiv/AaronAbhimanyu24}. Additionally, we evaluate the Qwen3‑0.6B/Qwen3‑8B pair \citep{DBLP:arxiv/yangli25} to investigate performance on reasoning-specialized models.

\paragraph{Baselines and Metrics.}
We compare our method against five baselines: (i) Vanilla SP \citep{DBLP:arxiv/chenborgeaud23} (standard speculative decoding); (ii) Top-K \cite{DBLP:conf/iclr/BachmannAnagnostidis25}, which relaxes verification based on token ranking; (iii) AutoJudge \citep{DBLP:conf/nips/GaripovVelikonivtsev25}, a training-based classifier approach; and two entropy-based methods, (iv) Target-Entropy and (v) Draft-Entropy \cite{DBLP:conf/emnlp/WangTian25}.
Evaluation metrics include Mean Accepted Tokens (MAT) for efficiency, benchmark-specific scores for quality, and end-to-end wall-clock speedup relative to Vanilla SP implemented in vLLM \citep{DBLP:conf/sosp/KwonLi23}. Further details are in Appendix \ref{ssy1122:additional_exp_details}.

\begin{table*}[h]
  \centering
  \caption{Inference deployment results with vLLM on GSM8K for  Llama‑3.2‑1B‑Instruct/Llama‑3.1‑8B‑Instruct (left) and Llama‑3.1‑8B‑Instruct/Llama‑3.1‑70B‑Instruct (right), tested on 8$\times$V100 GPUs. Speed is measured in tokens/s, and \textit{Speedup is reported relative to standard speculative decoding (i.e., Vanilla SP)}.
  }
  \scalebox{0.65}{
    \begin{tabular}{c|c|ccccc}
    \toprule[1.5pt]
    Method & Metric & \multicolumn{5}{c}{Llama-3.2-1B-Instruct \& Llama-3.1-8B-Instruct} \\
    \midrule[1pt]
    \multirow{4}{*}{{AutoJudge}} 
          & Threshold & 0.01  & 0.07  & 0.09  & 0.14  & 0.22  \\
          & Acc       & 82.50\% & 80.29\% & 79.39\% & 78.40\% & 75.59\% \\
          & Speed     & 39.11 & 45.06 & 47.12 & 48.87 & 53.70 \\
          & Speedup   & 1.02$\times$ & 1.17$\times$ & 1.23$\times$ & 1.27$\times$ & 1.40$\times$ \\
    \midrule
    \multirow{4}{*}{{KL}} 
          & Threshold & 0.10  & 0.30  & 0.40  & 0.50  & 0.90  \\
          & Acc       & 82.77\% & 80.96\% & 81.07\% & 79.46\% & 77.13\% \\
          & Speed     & 39.58 & 45.35 & 48.35 & 50.96 & 61.45 \\
          & Speedup   & 1.03$\times$ & 1.18$\times$ & 1.26$\times$ & 1.33$\times$ & 1.60$\times$ \\
    \midrule
    \multirow{2}{*}{{Vanilla SP}} 
          & Acc   & \multicolumn{5}{c}{82.50\%} \\
          & Speed & \multicolumn{5}{c}{38.40} \\
    \bottomrule[1.5pt]
    \end{tabular}%
  }
  \hspace{1pt}
  \scalebox{0.65}{
    \begin{tabular}{c|c|ccccc}
    \toprule[1.5pt]
    Method & Metric & \multicolumn{5}{c}{Llama-3.1-8B-Instruct \& Llama-3.1-70B-Instruct} \\
    \midrule[1pt]
    \multirow{4}{*}{{AutoJudge}} 
          & Thr   & 0.04  & 0.07  & 0.09  & 0.14  & 0.22  \\
          & Acc   & 92.77\% & 92.54\% & 92.09\% & 91.08\% & 89.96\% \\
          & Speed & 27.34 & 29.55 & 30.98 & 32.57 & 35.07 \\
          & Speedup & 1.13$\times$ & 1.22$\times$ & 1.28$\times$ & 1.35$\times$ & 1.45$\times$ \\
    \midrule
    \multirow{4}{*}{{KL}} 
          & Thr   & 0.20  & 0.40  & 0.50  & 0.90  & 1.40  \\
          & Acc   & 92.69\% & 91.94\% & 91.88\% & 91.28\% & 90.27\% \\
          & Speed & 27.21 & 30.10 & 31.49 & 33.21 & 35.10 \\
          & Speedup & 1.12$\times$ & 1.24$\times$ & 1.30$\times$ & 1.37$\times$ & 1.45$\times$ \\
    \midrule

    \multirow{2}{*}{{Vanilla SP}} 
          & Acc   & \multicolumn{5}{c}{93.00\%} \\
          & Speed & \multicolumn{5}{c}{24.19} \\
    \bottomrule[1.5pt]
    \end{tabular}%
    }
  \label{ssy1122:vllm_results}
\end{table*}

\subsection{Main Results}

\paragraph{Superior Efficiency-Accuracy Trade-off.}
We first compare methods on GSM8K and LiveCodeBench (Figures \ref{ssy1122:performace_gsm8k}-\ref{ssy1122:performace_lcb}). The training-free KL thresholding exhibits highly consistent trends with the training-based AutoJudge, often outperforming it slightly.
Specifically, on the Llama-1B/8B pair for GSM8K, KL thresholding improves MAT by \textbf{63.5\%} (10.61$\rightarrow$17.35) over Vanilla SP with only a $\sim$2\% accuracy drop. Similarly, on the 8B/70B pair, MAT increases by \textbf{128.8\%} (13.35$\rightarrow$30.55) with a negligible $\sim$1\% accuracy decrease.
In contrast, baselines like Top-K and entropy-based strategies suffer from a much sharper accuracy degradation as MAT increases.
A similar MAT--accuracy trade-off is also observed on LiveCodeBench, where code generation requires strict precision. KL thresholding improves MAT by \textbf{104.0\%} (1B/8B, 8.83$\rightarrow$18.01) and \textbf{53.7\%} (8B/70B, 11.49$\rightarrow$17.66) with only minimal performance loss. Its performance is comparable to AutoJudge, and is even slightly better in some settings. In comparison, Top-K and entropy-based strategies often fail to preserve code correctness, frequently resulting in non-executable programs. Overall, the consistently similar trends observed between KL thresholding and AutoJudge suggest that KL thresholding can serve as a strong and effective proxy for the verification signal, which aligns well with our motivation outlined in Section~\ref{ssy1122:rethinking_supervision} to leverage model-intrinsic signals for judge decoding.

\paragraph{Robustness to Domain Shift.}
To evaluate generalization, we test on MATH-500-Hard and MMLU-Pro benchmarks (Figures \ref{ssy1122:performace_math500}-\ref{ssy1122:performace_mmlupro}) using the AutoJudge classifier trained only on GSM8K, as domain-specific supervision is often unavailable.
The results indicate that KL thresholding demonstrates superior robustness compared to AutoJudge.
Specifically, on MATH-500-Hard, KL thresholding improves MAT from 13.93 to 32.02 (1B/8B) with a negligible $<$1\% performance drop, and also increases MAT from 13.18 to 26.67 (8B/70B) with about a 1.5\% performance drop. Conversely, the GSM8K-trained AutoJudge classifier becomes brittle under this domain shift, showing clear accuracy degradation comparable to the weaker Top-K baseline.
This observation confirms that simple, training-free distributional statistics serve as a more reliable proxy for verification signals than classifiers trained on limited domains.

\paragraph{Real-world Speedup in vLLM.}
We integrate KL thresholding into vLLM framework to benchmark real-world end-to-end latency (as shown in Table~\ref{ssy1122:vllm_results}).
Consistent with our previous MAT-Accuracy findings, KL thresholding effectively matches the speed-quality trade-off of the computationally expensive AutoJudge. It delivers a \textbf{1.3$\times$ speedup} over the Vanilla SP baseline while simultaneously keeping accuracy within a relatively small 1\%--2\% margin.
This result reinforces that costly critical-token mining is unnecessary for achieving practical inference acceleration in demanding production environments.

\begin{figure}[h]
    \centering
    \includegraphics[width=1.02\linewidth]{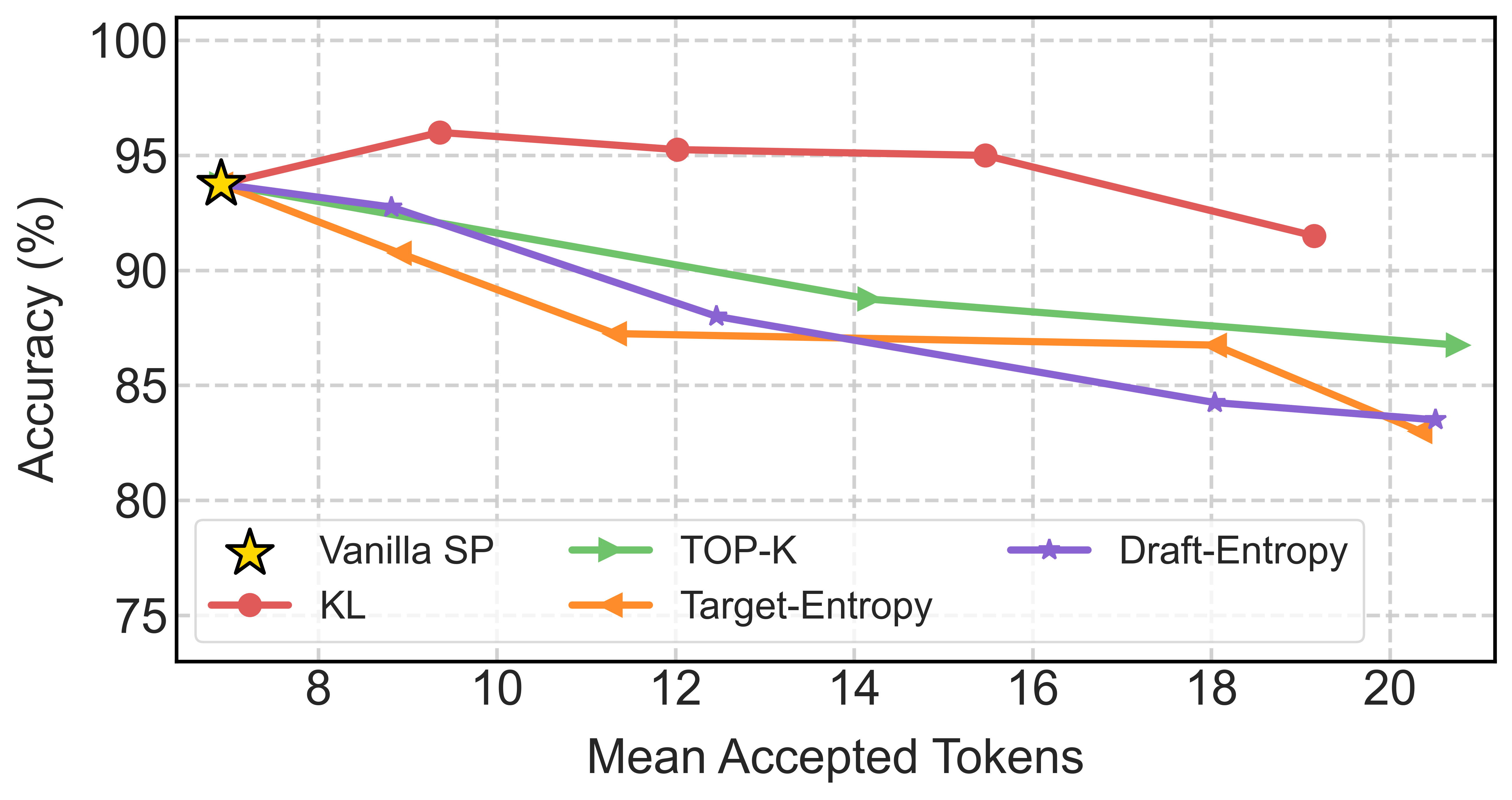}
    \caption{Accuracy and MAT on GSM8K for Qwen3-0.6B/Qwen3-8B.}
    \label{ssy1122:performace_qwen3}
\end{figure}

\paragraph{Efficiency in Long-Chain Reasoning.}
AutoJudge's inherent reliance on critical-token mining is particularly computationally prohibitive for reasoning models, where complex multi-step deliberation produces long trajectories with frequent draft-target divergence.
We empirically evaluate KL thresholding on the Qwen3-0.6B/Qwen3-8B pair (Figure~\ref{ssy1122:performace_qwen3}).
Remarkably, KL thresholding achieves a \textbf{2.77$\times$ increase in MAT} (6.91$\rightarrow$19.15) relative to Vanilla SP with only a marginal $\sim$2\% performance drop.
This evidence indicates that the training-free KL thresholding effectively handles the complex divergence patterns of reasoning models without the need for intricate and expensive supervision mining.

\section{Related Work}
\label{sec:related_work}

\paragraph{Speculative Decoding.}
To accelerate LLM inference, speculative decoding employs a ``draft-then-verify'' principle~\cite{DBLP:arxiv/chenborgeaud23,DBLP:conf/icml/leviathankalman23}, where a lightweight draft model rapidly generates candidate tokens that are subsequently verified efficiently in parallel by the larger target model. 
To further enhance drafting efficiency and avoid separate draft models, \textit{training-based} approaches have been proposed to modify the target model's architecture. Prominent examples include Medusa~\cite{DBLP:arxiv/caili24}, which augments the model with multiple decoding heads, and the EAGLE series~\cite{DBLP:conf/icml/liwei24a, DBLP:conf/emnlp/liwei24b, DBLP:arxiv/liwei25}, which accelerates drafting via feature-level autoregression and dynamic tree structures. 
Conversely, \textit{training-free} methods eliminate the need for additional parameters by exploiting the inherent redundancy in text. These strategies rely on retrieval or pattern matching, such as the N-gram based Lookahead decoding~\cite{DBLP:arxiv/fubailis24, DBLP:conf/kdd/zhaoxie24}, and retrieval-assisted generation like REST~\cite{DBLP:conf/naacl/hezhong24}, Token Recycling~\cite{DBLP:conf/acl/luowang25} and Suffix Decoding~\cite{DBLP:arxiv/oliarojia24,DBLP:conf/acl/huwang25}.

\paragraph{Judge Decoding.}
Standard speculative decoding suffers from a verification criterion that rejects semantically equivalent but lexically distinct tokens. 
To address this, Judge Decoding~\cite{DBLP:conf/iclr/BachmannAnagnostidis25} relaxes exact matching by employing a judge model to evaluate the semantic validity of draft tokens, drawing inspiration from the LLM-as-a-judge framework~\cite{DBLP:arxiv/GuJiang25,DBLP:conf/acl/BavarescoBernardi25}. 
Subsequent research focuses on automating the identification of ``critical tokens'', namely divergences that genuinely harm performance. AutoJudge~\cite{DBLP:conf/nips/GaripovVelikonivtsev25} automates data collection by pinpointing critical tokens that degrade model accuracy. 
SelfJudge~\cite{DBLP:journals/corr/abs-2510-02329} further refines this detection by calculating a semantic preservation score over the subsequent $N$ tokens to determine criticality.
Recent works introduce more granular verification based on utility and reasoning paths. 
Pivot-Aware Speculative Decoding~\cite{DBLP:journals/corr/abs-2511-00351} assesses token criticality by estimating the probability of generating correct answers via multiple sampled future paths. 
Similarly, R2R~\cite{DBLP:journals/corr/abs-2505-21600} distinguishes benign variations from logical errors by generating short continuations from both the draft and target tokens, employing a verifier to check for significant semantic divergence between the two paths.

\section{Conclusion}
\label{sec:conclusion}

This work bridges the gap between learned judge decoding and model-intrinsic statistics. We demonstrate that the costly ``criticality'' signal used in prior work is largely redundant to the distributional disagreement between models. By replacing supervised classifiers with a simple, training-free KL threshold, we achieve comparable or superior efficiency. Furthermore, our theoretical analysis unifies these seemingly distinct approaches, proving that KL screening and linear classifiers operate on the same underlying logit primitives. Our experiments validate that efficient judge decoding does not require complex supervision, offering a streamlined path for future deployment.

% \clearpage
\section*{Limitations}
\label{sec:limitations}
We acknowledge some limitations in our work. First, due to computational constraints, we have not yet evaluated our approach on extremely large-scale models such as Llama-3.1-405B-Instruct, which would be essential for comprehensively validating scalability and real-world applicability. Second, the KL divergence computation introduces additional computational overhead during inference. We have not implemented highly optimized custom Triton kernels to optimize memory access and enable operator fusion, which could significantly reduce this cost. As a result, our current implementation may bottleneck the potential speedups achievable in modern high-performance inference frameworks like vLLM.

\section*{Ethical Considerations}
\label{sec:ethical_considerations}
The datasets utilized in this work are derived solely from publicly accessible benchmarks, ensuring that no sensitive or private information is included. Moreover, all LLMs and baseline methods employed in our experiments are also available to the public. We have taken care to acknowledge the original authors by properly citing their work. Regarding the preparation of this manuscript, we acknowledge the use of LLMs solely for language polishing and generating minor graphical icons for the framework diagram. They did not contribute to the research design, data analysis, or scientific claims presented in this paper. All ideas, analyses, and conclusions remain the sole and original work of the authors.

\bibliography{custom}

\clearpage
\appendix
\renewcommand{\thesection}{\Alph{section}}
\section{Appendix}\label{sec:appendix}

\subsection{Proof of Theorem \ref{ssy1122:structural_correspondence}}
\label{sec:a0_setup}
To facilitate the derivation, we first review the notation and setup. Let the target and draft logits be $z_t := W_t h_t + b_t$ and $z_d := W_d h_d + b_d$ respectively, with corresponding probability distributions $P_{t} := \mathrm{softmax}(z_{t})$ and $P_{d} := \mathrm{softmax}(z_{d})$. We define the concatenated hidden representation $x := [h_t; h_d] \in \mathbb{R}^{2d}$ and the logit difference vector $\delta := z_t - z_d$. Consequently, $\delta$ is affine in $x$ and is given by $\delta = Mx + c$, where $M := [W_t, -W_d]$ and $c := b_t - b_d$. For any vocabulary indices $i, j \in \{1, \dots, V\}$, the pairwise logit-gap difference is defined as $\Delta_{ij}(x) := (z_t(i) - z_t(j)) - (z_d(i) - z_d(j))$, which yields the $\Delta_{ij}(x) = \delta_i - \delta_j$. This linear primitive $\Delta_{ij}(x)$ serves as the basis for our analysis.

\begin{proof}[\textbf{Proof Sketch of Theorem \ref{ssy1122:structural_correspondence}}] We ground the analysis in the affine primitives $\Delta_{ij}(x)$. By expanding the KL divergence via the Fisher information metric (\S\ref{sec:a1_kl_bregman_fisher}), we decompose the divergence into a weighted quadratic aggregation of these primitives (\S\ref{sec:a2_pairwise_kl}). We subsequently demonstrate that discrete ranking inconsistencies necessitate boundary-crossing deviations (\S\ref{sec:a3_topk_mismatch}), thereby enforcing a quantitative lower bound on the divergence (\S\ref{sec:a4_kl_lower_bound}). Finally, we prove that the linear classifier's decision surface forms a linear superposition of this same primitive basis (\S\ref{sec:a5_linear_classifier_alignment}), confirming that both mechanisms regulate the same logit-space deviations.
\end{proof}
% ---------------------------------------------------------
\subsubsection{KL as a Bregman divergence and Fisher second-order approximation}
\label{sec:a1_kl_bregman_fisher}

We define the log-sum-exp potential $A(z) := \log \sum_{i=1}^V e^{z_i}$, which satisfies $\nabla A(z) = p$ and $\nabla^2 A(z) = \mathrm{diag}(p) - pp^\top$ for $p = \mathrm{softmax}(z)$.

\begin{lemma}[KL as a Bregman divergence]
\label{lem:a1_bregman}
Let $p=\mathrm{softmax}(z)$ and $q=\mathrm{softmax}(z')$. Then $D_{\mathrm{KL}}(p\Vert q)=A(z')-A(z)-\nabla A(z)^\top(z'-z)$.
\end{lemma}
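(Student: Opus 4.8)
The plan is to prove Lemma~\ref{lem:a1_bregman} by recognizing it as a direct instance of the standard identity relating a Bregman divergence to its generating convex function, specialized to $A(z)=\log\sum_i e^{z_i}$. Recall that for any convex, differentiable $A$, the associated Bregman divergence is $B_A(z'\Vert z):=A(z')-A(z)-\nabla A(z)^\top(z'-z)$. So the entire content of the lemma is the claim that $D_{\mathrm{KL}}(p\Vert q)=B_A(z'\Vert z)$ when $p=\mathrm{softmax}(z)$, $q=\mathrm{softmax}(z')$. I would establish this by a short direct computation rather than by invoking any outside theorem, since the computation is clean.

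First I would write out $A(z')-A(z)$. Since $A(z)=\log\sum_i e^{z_i}$, we have $p_i=e^{z_i}/\sum_k e^{z_k}$ and hence $\log p_i = z_i - A(z)$, i.e.\ $A(z)=z_i-\log p_i$ for every $i$; likewise $A(z')=z'_i-\log q_i$. Next I would use $\nabla A(z)=p$, so that $\nabla A(z)^\top(z'-z)=\sum_i p_i(z'_i-z_i)$. Substituting $z_i=A(z)+\log p_i$ and $z'_i=A(z')+\log q_i$ into this sum and using $\sum_i p_i=1$ gives $\nabla A(z)^\top(z'-z)=A(z')-A(z)+\sum_i p_i\log q_i-\sum_i p_i\log p_i$. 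Plugging this back into $A(z')-A(z)-\nabla A(z)^\top(z'-z)$, the $A(z')-A(z)$ terms cancel and one is left with $\sum_i p_i\log p_i-\sum_i p_i\log q_i=\sum_i p_i\log(p_i/q_i)=D_{\mathrm{KL}}(p\Vert q)$, as desired.

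There is essentially no serious obstacle here; the only things to be careful about are bookkeeping details — that the identity $\log p_i=z_i-A(z)$ holds coordinatewise, that $\sum_i p_i=1$ is used to collapse the constant $A$-terms, and that $\nabla A(z)=p$ (which follows by differentiating log-sum-exp and is already recorded in the text just before the lemma). One could alternatively phrase the proof in one line by citing that $A$ is the convex conjugate of the negative entropy on the simplex and that KL is the Bregman divergence of that conjugate, but the self-contained substitution above is shorter to write and keeps the appendix elementary. I would therefore present the three-line computation and close.
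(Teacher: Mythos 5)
Your computation is correct and is essentially the paper's own proof run in the reverse direction: both rest on the identities $\log p_i = z_i - A(z)$, $\log q_i = z'_i - A(z')$, $\nabla A(z)=p$, and $\sum_i p_i = 1$, the paper expanding $D_{\mathrm{KL}}(p\Vert q)$ to reach the Bregman form while you expand the Bregman form to recover the KL divergence. No gap; the argument is fine as stated.
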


\begin{proof}
By the log-softmax identities $\log p_i = z_i - A(z)$ and $\log q_i = z'_i - A(z')$, we have:
\begin{align}
D_{\mathrm{KL}}&(p \Vert q)= \sum_i p_i \left( \log p_i - \log q_i \right) \nonumber \\
&= \sum_i p_i \left[ (z_i - A(z)) - (z'_i - A(z')) \right] \nonumber \\
&= (A(z') - A(z)) \sum_i p_i - \sum_i p_i (z'_i - z_i)  \nonumber \\
&= A(z') - A(z) - \nabla A(z)^\top (z' - z),
\label{eq:a1_summation_step}
\end{align}
where the last step follows from $\sum_i p_i = 1$ and the fact that $\nabla A(z) = p$.
\end{proof}

\begin{lemma}[Second-order expansion around $z_d$]
\label{lem:a1_second_order}
Let $P_d=\mathrm{softmax}(z_d)$, $P_t=\mathrm{softmax}(z_t)$ and write $z_t=z_d+\delta$.
Conditioned on accepted prefixes in speculative sampling, the two models are close so that $\|\delta\|$ is small (large discrepancies are rejected and do not contribute).
Then $D_{\mathrm{KL}}(P_t\|P_d)=\frac12\,\delta^\top F(P_d)\,\delta + O(\|\delta\|^3)$, where $F(P_d)=\nabla^2 A(z_d)=\mathrm{diag}(P_d)-P_dP_d^\top$.
\end{lemma}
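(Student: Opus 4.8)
The plan is to prove Lemma~\ref{lem:a1_second_order} by specializing the Bregman identity of Lemma~\ref{lem:a1_bregman} and applying a second-order Taylor expansion of the log-sum-exp potential $A$. First I would invoke Lemma~\ref{lem:a1_bregman} with $p = P_t = \mathrm{softmax}(z_t)$ and $q = P_d = \mathrm{softmax}(z_d)$, which gives
\begin{equation}
D_{\mathrm{KL}}(P_t \Vert P_d) = A(z_d) - A(z_t) - \nabla A(z_t)^\top (z_d - z_t).
\end{equation}
Rather than expanding around $z_t$, I would instead rewrite this using the \emph{dual} (symmetric-in-form) Bregman expression, or more directly, note that swapping the roles and using $D_{\mathrm{KL}}(P_t \Vert P_d) = A(z_d) - A(z_t) - \langle P_t, z_d - z_t\rangle$ with $z_t = z_d + \delta$ yields $D_{\mathrm{KL}}(P_t\Vert P_d) = A(z_d) - A(z_d+\delta) + \langle P_t, \delta\rangle$. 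It is cleaner, however, to Taylor-expand $A$ around $z_d$ in the forward direction: $A(z_d+\delta) = A(z_d) + \nabla A(z_d)^\top \delta + \tfrac12 \delta^\top \nabla^2 A(z_d)\,\delta + O(\|\delta\|^3)$, and separately expand $\nabla A(z_d+\delta) = \nabla A(z_d) + \nabla^2 A(z_d)\,\delta + O(\|\delta\|^2)$, i.e. $P_t = P_d + F(P_d)\delta + O(\|\delta\|^2)$.

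The key computation is then to substitute the Bregman form of $D_{\mathrm{KL}}(P_t\Vert P_d)$ written as an expansion around $z_d$. Using the identity $D_{\mathrm{KL}}(P_t\Vert P_d) = A(z_d) - A(z_t) - \nabla A(z_t)^\top(z_d - z_t)$ and plugging $z_t = z_d + \delta$, the first-order terms cancel and what survives is $\tfrac12\,\delta^\top \nabla^2 A(z_d)\,\delta + O(\|\delta\|^3)$; one checks the cancellation by writing $A(z_t) = A(z_d) + \nabla A(z_d)^\top\delta + \tfrac12\delta^\top F(P_d)\delta + O(\|\delta\|^3)$ and $\nabla A(z_t)^\top(z_d - z_t) = -(\nabla A(z_d) + F(P_d)\delta + O(\|\delta\|^2))^\top \delta = -\nabla A(z_d)^\top\delta - \delta^\top F(P_d)\delta + O(\|\delta\|^3)$, so that $A(z_d) - A(z_t) - \nabla A(z_t)^\top(z_d - z_t) = -\nabla A(z_d)^\top\delta - \tfrac12\delta^\top F(P_d)\delta + \nabla A(z_d)^\top\delta + \delta^\top F(P_d)\delta + O(\|\delta\|^3) = \tfrac12\delta^\top F(P_d)\delta + O(\|\delta\|^3)$. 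Finally I would identify $\nabla^2 A(z_d) = \mathrm{diag}(P_d) - P_d P_d^\top = F(P_d)$ from the stated properties of $A$, completing the claim.

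The main obstacle is not the algebra but the rigorous justification of the remainder term and of the ``$\|\delta\|$ small'' hypothesis. Concretely, I would need to argue that, conditioned on accepted prefixes in speculative sampling, the logit gap $\delta = z_t - z_d$ is uniformly bounded in a neighborhood where Taylor's theorem with Lagrange remainder applies, so that the $O(\|\delta\|^3)$ term is controlled by $\sup \|\nabla^3 A\|$ along the segment $[z_d, z_t]$; here one uses that $A$ is smooth ($C^\infty$) and that its third derivative is bounded uniformly (the entries of $\nabla^3 A$ are polynomials in the softmax probabilities, hence bounded by an absolute constant independent of $z$). The substantive modeling point — that large-discrepancy tokens are rejected by the acceptance rule and therefore do not contribute — is what licenses treating $\|\delta\|$ as a small parameter; I would state this as the operative assumption (as the lemma already does) rather than attempt to derive a quantitative bound on $\|\delta\|$ from the acceptance probability, since that would require additional distributional assumptions. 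With that caveat made explicit, the proof is a short and routine second-order expansion.
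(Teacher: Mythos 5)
Your proposal is correct and follows essentially the same route as the paper's proof: apply the Bregman identity of Lemma~\ref{lem:a1_bregman} with $p=P_t$, $q=P_d$, expand $A(z_t)$ to second order and $P_t=\nabla A(z_t)$ to first order around $z_d$, and observe that the linear terms cancel, leaving $\tfrac12\,\delta^\top F(P_d)\,\delta+O(\|\delta\|^3)$. Your added remarks on controlling the remainder via the uniform boundedness of $\nabla^3 A$ and on treating the smallness of $\|\delta\|$ as an operative assumption are sensible refinements but do not change the argument.
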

\begin{proof}
Applying Lemma \ref{lem:a1_bregman} with $p=P_t$ and $q=P_d$, and noting $z_d - z_t = -\delta$, we have the identity $D_{\mathrm{KL}}(P_t \| P_d) = A(z_d) - A(z_t) + P_t^\top \delta$.
We expand $A(z_t)$ to the second order as $A(z_t) = A(z_d) + P_d^\top \delta + \frac{1}{2} \delta^\top F(P_d) \delta + O(\|\delta\|^3)$.
Similarly, expanding $P_t$ to the first order gives $P_t = P_d + F(P_d) \delta + O(\|\delta\|^2)$, which implies $P_t^\top \delta = P_d^\top \delta + \delta^\top F(P_d) \delta + O(\|\delta\|^3)$.
Substituting these expressions into the KL identity leads to the cancellation of linear terms, yielding $D_{\mathrm{KL}}(P_t \| P_d) = \frac{1}{2} \delta^\top F(P_d) \delta + O(\|\delta\|^3)$.
\end{proof}

\begin{corollary}[KL's quadratic form controlled by a linear map of the concatenated state]
\label{cor:a1_quadratic_x}
Recall that $\delta=z_t-z_d$ and $\delta = Mx + c$ with $x=[h_t;h_d]$.
Combining the second-order expansion in Lemma~\ref{lem:a1_second_order} with $\delta = Mx+c$, we obtain
\begin{equation}
D_{\mathrm{KL}}(P_t\Vert P_d)
\approx
\frac12\,(Mx+c)^\top F(P_d)\,(Mx+c).
\end{equation}
\end{corollary}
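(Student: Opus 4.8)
The plan is to prove Corollary~\ref{cor:a1_quadratic_x} by a direct change of variables in the Fisher expansion of Lemma~\ref{lem:a1_second_order}, and then to record the elementary algebraic identity that recasts the resulting quadratic form as a weighted sum over the primitives $\Delta_{ij}(x)$, thereby also supplying the ``quadratic aggregation'' claim (part (i)) of Theorem~\ref{ssy1122:structural_correspondence}.

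First I would invoke Lemma~\ref{lem:a1_second_order}, which gives $D_{\mathrm{KL}}(P_t \Vert P_d) = \tfrac12\,\delta^\top F(P_d)\,\delta + O(\|\delta\|^3)$ with $\delta = z_t - z_d$ and $F(P_d) = \mathrm{diag}(P_d) - P_d P_d^\top$. Recalling from the setup that $\delta$ is affine in the concatenated state, namely $\delta = Mx + c$ with $M = [W_t,\,-W_d]$ and $c = b_t - b_d$, I would substitute this expression into the quadratic term and drop the cubic remainder. Discarding the remainder is legitimate in precisely the regime isolated in Lemma~\ref{lem:a1_second_order}: speculative verification only ever acts on prefixes that both models accept, so $\|\delta\|$ is controlled and the leading Fisher term dominates. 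This yields $D_{\mathrm{KL}}(P_t \Vert P_d) \approx \tfrac12\,(Mx + c)^\top F(P_d)\,(Mx + c)$, which is exactly the displayed claim.

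To make the connection to the primitives explicit, I would close with the standard identity $v^\top \bigl(\mathrm{diag}(p) - p p^\top\bigr) v = \mathrm{Var}_{i \sim p}[v_i] = \tfrac12 \sum_{i,j} p_i p_j (v_i - v_j)^2$, valid for any probability vector $p$ and any vector $v$. Applying it with $p = P_d$ and $v = \delta = Mx + c$, and using $\delta_i - \delta_j = \Delta_{ij}(x)$, gives $D_{\mathrm{KL}}(P_t \Vert P_d) \approx \tfrac14 \sum_{i,j} P_d(i)\,P_d(j)\,\Delta_{ij}(x)^2$, i.e.\ a $P_d$-weighted quadratic aggregation of the affine primitives in which the weights $P_d(i)P_d(j)$ concentrate the contribution on token pairs that the draft already deems plausible. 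I do not expect a genuine obstacle, since the corollary is essentially a reparametrisation of Lemma~\ref{lem:a1_second_order}; the only point that warrants care is that the statement is of the $\approx$ (leading-order) kind — the pairwise-variance rewriting is exact, but replacing $D_{\mathrm{KL}}$ by the quadratic form rests on the small-$\delta$ approximation, which should be flagged explicitly.
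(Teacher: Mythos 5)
Your proof is correct and follows essentially the same route as the paper: substitute the affine relation $\delta = Mx + c$ into the second-order Fisher expansion of Lemma~\ref{lem:a1_second_order} and drop the $O(\|\delta\|^3)$ remainder in the small-$\delta$ regime, which is all the corollary requires. Your closing pairwise-variance identity is a correct bonus, but it is not needed here — it is exactly the content the paper defers to Lemma~\ref{lem:a2_pairwise_decomp} and Corollary~\ref{cor:a2_kl_quadratic_aggregation}.
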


% ---------------------------------------------------------
\subsubsection{Pairwise form: KL as a Fisher-weighted sum of primitives}
\label{sec:a2_pairwise_kl}

\begin{lemma}[Pairwise decomposition of the Fisher quadratic form]
\label{lem:a2_pairwise_decomp}
Let $p\in\mathbb{R}^V$ satisfy $p_i\ge 0$ and $\sum_{i=1}^V p_i=1$, and let
$F(p)=\mathrm{diag}(p)-pp^\top$. Then for any $\delta\in\mathbb{R}^V$,
\begin{equation}
\frac12\,\delta^\top F(p)\,\delta
=
\frac14\sum_{i=1}^V\sum_{j=1}^V
p_i p_j\,(\delta_i-\delta_j)^2.
\label{eq:a2_pairwise_fisher}
\end{equation}
\end{lemma}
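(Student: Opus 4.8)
The plan is to prove the pairwise identity
\[
\tfrac12\,\delta^\top F(p)\,\delta = \tfrac14\sum_{i,j} p_i p_j (\delta_i-\delta_j)^2
\]
by direct expansion of both sides, which is the natural approach since the claim is a finite-dimensional algebraic identity with no analytic subtlety. First I would expand the right-hand side: writing $(\delta_i-\delta_j)^2 = \delta_i^2 - 2\delta_i\delta_j + \delta_j^2$ and distributing the double sum, the $\delta_i^2$ term gives $\sum_i p_i \delta_i^2 \big(\sum_j p_j\big) = \sum_i p_i \delta_i^2$ using $\sum_j p_j = 1$; by symmetry the $\delta_j^2$ term gives the same $\sum_j p_j \delta_j^2$; and the cross term gives $-2\big(\sum_i p_i \delta_i\big)\big(\sum_j p_j \delta_j\big) = -2\,(p^\top\delta)^2$. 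Collecting, the right-hand side equals $\tfrac14\big(2\sum_i p_i\delta_i^2 - 2(p^\top\delta)^2\big) = \tfrac12\big(\sum_i p_i\delta_i^2 - (p^\top\delta)^2\big)$.

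Next I would expand the left-hand side: $\delta^\top F(p)\,\delta = \delta^\top(\mathrm{diag}(p) - pp^\top)\delta = \delta^\top\mathrm{diag}(p)\delta - \delta^\top p p^\top\delta = \sum_i p_i\delta_i^2 - (p^\top\delta)^2$, so $\tfrac12\,\delta^\top F(p)\,\delta = \tfrac12\big(\sum_i p_i\delta_i^2 - (p^\top\delta)^2\big)$, which matches the right-hand side exactly. This completes the proof. Equivalently, one can recognize both sides as $\tfrac12\mathrm{Var}_{i\sim p}(\delta_i)$: the left side is half the variance written via the covariance matrix $F(p)$, and the right side is half the variance written in the standard "$\tfrac12 E[(X-X')^2]$" form for an independent copy $X'$.

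The proof involves only routine algebra, so there is no real obstacle; the only point requiring a moment of care is ensuring the normalization constant is correct — specifically that the factor $\tfrac14$ on the right (rather than $\tfrac12$) is exactly what is needed to absorb the double-counting from summing $\delta_i^2$ and $\delta_j^2$ symmetrically over the full $V\times V$ grid. I would make that explicit by tracking the two identical quadratic contributions rather than folding them together prematurely. No smallness assumption on $\delta$ or positivity beyond $p_i\ge 0,\ \sum_i p_i = 1$ is needed, so the lemma holds as an exact identity and can then be combined with Lemma~\ref{lem:a1_second_order} and Corollary~\ref{cor:a1_quadratic_x} to express $D_{\mathrm{KL}}(P_t\Vert P_d) \approx \tfrac14\sum_{i,j} P_d(i)P_d(j)\,\Delta_{ij}(x)^2$, exhibiting KL as the advertised Fisher-weighted quadratic aggregation of the primitives $\Delta_{ij}(x)$.
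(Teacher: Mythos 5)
Your proposal is correct and follows essentially the same route as the paper's proof: both expand $\delta^\top F(p)\,\delta$ into $\sum_i p_i\delta_i^2-(p^\top\delta)^2$ and expand the pairwise double sum into $2\sum_i p_i\delta_i^2-2(p^\top\delta)^2$, then match the constants. The added variance interpretation $\tfrac12\mathrm{Var}_{i\sim p}(\delta_i)$ is a pleasant extra but does not change the argument.
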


\begin{proof}
Expand the left-hand side using $F(p)=\mathrm{diag}(p)-pp^\top$, we have
\begin{align}
\delta^\top F(p)\,\delta
&=\delta^\top \mathrm{diag}(p)\,\delta - \delta^\top (pp^\top)\delta \nonumber\\
&=\sum_{i=1}^V p_i \delta_i^2 - \bigl(p^\top\delta\bigr)^2\\
&=\sum_{i=1}^V p_i \delta_i^2 - \Bigl(\sum_{i=1}^V p_i\delta_i\Bigr)^2.
\label{eq:a2_fisher_expand}
\end{align}
Expand the right-side pairwise sum, we have
\begin{align}
&\sum_{i=1}^V\sum_{j=1}^V p_ip_j(\delta_i-\delta_j)^2=\sum_{i,j}p_ip_j(\delta_i^2+\delta_j^2-2\delta_i\delta_j) \nonumber\\
&=\sum_{i,j}p_ip_j\delta_i^2 + \sum_{i,j}p_ip_j\delta_j^2
-2\sum_{i,j}p_ip_j\delta_i\delta_j\nonumber\\
&=2\sum_i p_i\delta_i^2
-2\Bigl(\sum_i p_i\delta_i\Bigr)^2,
\label{eq:a2_pairwise_expand}
\end{align}
where the last equality follows from Eq. \eqref{eq:a2_fisher_expand}.
Dividing both sides by $4$ yields Eq. \eqref{eq:a2_pairwise_fisher}.
\end{proof}

\begin{corollary}[KL as quadratic aggregation of $\Delta_{ij}$]
\label{cor:a2_kl_quadratic_aggregation}
Combining Lemma~\ref{lem:a1_second_order} and Lemma~\ref{lem:a2_pairwise_decomp}
with $p=P_d$ and $\Delta_{ij}(x)=\delta_i-\delta_j$, we have
$D_{\mathrm{KL}}(P_t\Vert P_d)\approx\sum_{i=1}^V\sum_{j=1}^V\alpha_{ij}\,\Delta_{ij}(x)^2$, where $\alpha_{ij}:=\frac14\,P_d(i)\,P_d(j).$
\end{corollary}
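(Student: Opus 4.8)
The plan is to obtain the corollary by directly composing the two lemmas named in its statement, so the argument is essentially a two-step substitution. First I would invoke Lemma~\ref{lem:a1_second_order}: in the small-$\|\delta\|$ regime (accepted speculative prefixes keep the draft and target distributions close, while large discrepancies are rejected and never enter the analysis), it gives $D_{\mathrm{KL}}(P_t\Vert P_d)=\tfrac12\,\delta^\top F(P_d)\,\delta+O(\|\delta\|^3)$ with $F(P_d)=\mathrm{diag}(P_d)-P_dP_d^\top$. Discarding the cubic remainder is exactly what the ``$\approx$'' in the statement records, so the quantitative work reduces to rewriting the Fisher quadratic form $\tfrac12\,\delta^\top F(P_d)\,\delta$ in the claimed pairwise shape.

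Second, I would apply Lemma~\ref{lem:a2_pairwise_decomp} with $p=P_d$ — a legitimate choice since $P_d$ is a probability vector, so $P_d(i)\ge 0$ and $\sum_i P_d(i)=1$. This yields $\tfrac12\,\delta^\top F(P_d)\,\delta=\tfrac14\sum_{i=1}^V\sum_{j=1}^V P_d(i)P_d(j)(\delta_i-\delta_j)^2$; the factor $\tfrac14$ is the original $\tfrac12$ from the KL expansion times the $\tfrac12$ that corrects for the symmetric double counting of the unordered pairs $\{i,j\}$ (the diagonal $i=j$ terms vanish). Finally I would substitute the identity $\Delta_{ij}(x)=\delta_i-\delta_j$ recorded in the setup — recall $\delta=z_t-z_d=Mx+c$, so $(z_t(i)-z_t(j))-(z_d(i)-z_d(j))=\delta_i-\delta_j$ — turning the pairwise sum into $\tfrac14\sum_{i,j}P_d(i)P_d(j)\,\Delta_{ij}(x)^2$. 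Setting $\alpha_{ij}:=\tfrac14 P_d(i)P_d(j)$ gives $D_{\mathrm{KL}}(P_t\Vert P_d)\approx\sum_{i,j}\alpha_{ij}\,\Delta_{ij}(x)^2$, as claimed.

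There is no real obstacle here: the corollary is a pure composition of results already proved, and both nontrivial ingredients (the Bregman/Fisher second-order expansion and the algebraic rewriting of the Fisher form as a pairwise sum) are handed to us by Lemmas~\ref{lem:a1_second_order} and~\ref{lem:a2_pairwise_decomp}. The only point I would be careful to flag is that the statement is an approximation, not an identity: its validity is tied to the $O(\|\delta\|^3)$ truncation, which is justified precisely in the speculative-sampling regime of Lemma~\ref{lem:a1_second_order}, and I would keep that caveat visible so ``$\approx$'' is not read as ``$=$''. If an exact statement were wanted one could retain the full remainder and write $D_{\mathrm{KL}}(P_t\Vert P_d)=\sum_{i,j}\alpha_{ij}\,\Delta_{ij}(x)^2+O(\|Mx+c\|^3)$, but that refinement is not needed for the role the corollary plays in Theorem~\ref{ssy1122:structural_correspondence}.
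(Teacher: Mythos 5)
Your proof is correct and matches the paper's own (implicit) argument exactly: the corollary is obtained by composing Lemma~\ref{lem:a1_second_order} with Lemma~\ref{lem:a2_pairwise_decomp} applied at $p=P_d$, then substituting $\Delta_{ij}(x)=\delta_i-\delta_j$, with the ``$\approx$'' absorbing the $O(\|\delta\|^3)$ remainder. Your accounting of the $\tfrac14$ factor and the caveat about the approximation regime are consistent with the paper.
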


% ---------------------------------------------------------

\subsubsection{Top-K inconsistency and boundary-crossing primitive}
\label{sec:a3_topk_mismatch}

The discrete supervision signal used for classifier training is defined through the consistency between the target and draft models’ top-K prediction sets.  
Since LLMs assign most of the probability mass to a small subset of tokens, agreement of the top-K sets corresponds to similar generation behavior, whereas disagreement serves as an indicator of divergent outputs.  
A top‑$k$ inconsistency implies the existence of a pair of logits whose pairwise difference crosses the ranking boundaries, yielding a lower bound on the primitive $\Delta_{ij}(x)$. Let $T_k(x)$ and $D_k(x)$ denote the index sets of the largest $k$ logits under $z_t$ and $z_d$, respectively.  
Let $z_t^{(k)}$ and $z_d^{(k)}$ denote the $k$‑th largest logit values of $z_t$ and $z_d$. Denote the top‑$k$ margins by
$\gamma_t^{(k)} = z_t^{(k)} - z_t^{(k+1)}$ and
$\gamma_d^{(k)} = z_d^{(k)} - z_d^{(k+1)}$.

\begin{proposition}[Top‑$k$ inconsistency implies a boundary‑crossing pairwise gap]
\label{prop:a3_topk_mismatch_boundary_crossing}
If $T_k(x)\neq D_k(x)$, then there exist
$i\in T_k(x)\setminus D_k(x)$ and
$j\in D_k(x)\setminus T_k(x)$
such that
$\Delta_{ij}(x)\ge \gamma_t^{(k)}+\gamma_d^{(k)}$.
\end{proposition}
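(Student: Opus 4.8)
The plan is to reduce the statement to two elementary order‑statistic inequalities, one on the target logits and one on the draft logits, and then add them. First I would record the pigeonhole step: since $T_k(x)$ and $D_k(x)$ are both index sets of cardinality $k$, the hypothesis $T_k(x)\neq D_k(x)$ forces $|T_k(x)\setminus D_k(x)| = |D_k(x)\setminus T_k(x)| = k-|T_k(x)\cap D_k(x)|\ge 1$, so both difference sets are nonempty. I then fix an arbitrary $i\in T_k(x)\setminus D_k(x)$ and an arbitrary $j\in D_k(x)\setminus T_k(x)$; these are the pair the proposition asserts to exist.

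Next I would translate membership (resp. non‑membership) in a top‑$k$ set into bounds by the $k$‑th and $(k{+}1)$‑th order statistics. Because $i$ is among the $k$ largest logits of $z_t$, we get $z_t(i)\ge z_t^{(k)}$; because $j$ is \emph{not} among them, $z_t(j)\le z_t^{(k+1)}$. Subtracting yields $z_t(i)-z_t(j)\ge z_t^{(k)}-z_t^{(k+1)}=\gamma_t^{(k)}$. By the symmetric reasoning applied to the draft model, $j\in D_k(x)$ gives $z_d(j)\ge z_d^{(k)}$ and $i\notin D_k(x)$ gives $z_d(i)\le z_d^{(k+1)}$, hence $z_d(j)-z_d(i)\ge \gamma_d^{(k)}$. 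Finally I would expand the primitive, $\Delta_{ij}(x)=\delta_i-\delta_j=(z_t(i)-z_d(i))-(z_t(j)-z_d(j))=\bigl(z_t(i)-z_t(j)\bigr)+\bigl(z_d(j)-z_d(i)\bigr)$, and add the two displayed inequalities to conclude $\Delta_{ij}(x)\ge\gamma_t^{(k)}+\gamma_d^{(k)}$.

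The argument is a pigeonhole step followed by order‑statistic bookkeeping, so I do not anticipate a genuine obstacle; the only point requiring care is the well‑definedness of the top‑$k$ index sets when logits are tied. I would handle this by committing to a fixed tie‑breaking rule on indices, so that $T_k(x)$ and $D_k(x)$ are honest $k$‑element sets and the chain $z_t(i)\ge z_t^{(k)}\ge z_t^{(k+1)}\ge z_t(j)$ (together with its draft analogue) holds verbatim. It is also worth noting, for the downstream use of this proposition, that the bound degrades gracefully: when the mismatch involves only tokens near the ranking boundary the margins $\gamma_t^{(k)},\gamma_d^{(k)}$ are small, so the forced deviation in the primitive space is correspondingly mild, which is exactly the behaviour the later KL lower bound exploits.
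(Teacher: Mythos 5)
Your proposal is correct and follows essentially the same argument as the paper's proof: nonemptiness of both difference sets, the order-statistic bounds $z_t(i)\ge z_t^{(k)}$, $z_t(j)\le z_t^{(k+1)}$, $z_d(j)\ge z_d^{(k)}$, $z_d(i)\le z_d^{(k+1)}$, and summation to bound $\Delta_{ij}(x)$. The only additions beyond the paper's proof are the explicit cardinality count and the remark on tie-breaking, both harmless refinements.
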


\begin{proof}
Since $T_k(x)\neq D_k(x)$, both $T_k(x)\setminus D_k(x)$ and $D_k(x)\setminus T_k(x)$ are non‑empty.  
Select any $i\in T_k(x)\setminus D_k(x)$ and $j\in D_k(x)\setminus T_k(x)$. For $z_t$, $i\in T_k(x)$ implies $z_t(i)\ge z_t^{(k)}$, and $j\notin T_k(x)$ implies $z_t(j)\le z_t^{(k+1)}$, we have
\begin{equation}
z_t(i)-z_t(j)\ge z_t^{(k)}-z_t^{(k+1)}=\gamma_t^{(k)}. 
\label{ssy1122:topk_ineq1}
\end{equation}

For $z_d$, $j\in D_k(x)$ implies $z_d(j)\ge z_d^{(k)}$, and $i\notin D_k(x)$ implies $z_d(i)\le z_d^{(k+1)}$, we have
\begin{equation}
z_d(i)-z_d(j)\le z_d^{(k+1)}-z_d^{(k)}=-\gamma_d^{(k)}.
\label{ssy1122:topk_ineq2}
\end{equation}

Combining the above inequalities Eq. \eqref{ssy1122:topk_ineq1} and Eq. \eqref{ssy1122:topk_ineq2} gives $\Delta_{ij}(x) \ge \gamma_t^{(k)}+\gamma_d^{(k)}$.
\end{proof}

Top‑$k$ inconsistency therefore ensures the existence of a boundary‑crossing primitive $(i,j)$ whose logit‑difference gap exceeds the sum of the target and draft margins.  
This provides the link between the discrete supervision condition and a continuous deviation in the primitive space, which is used in Appendix~\ref{sec:a4_kl_lower_bound} to establish a lower bound on the KL divergence.

% ---------------------------------------------------------
\subsubsection{From boundary-crossing primitives to a KL lower bound}
\label{sec:a4_kl_lower_bound}

Proposition~\ref{prop:a3_topk_mismatch_boundary_crossing} established that 
a top‑$k$ mismatch implies the existence of a boundary‑crossing primitive $(i,j)$ 
whose pairwise logit‑difference satisfies 
$|\Delta_{ij}(x)| \ge \gamma_t^{(k)} + \gamma_d^{(k)}$. 
Meanwhile, Corollary~\ref{cor:a2_kl_quadratic_aggregation} expresses the 
KL as a Fisher‑weighted quadratic aggregation 
over all such primitives. 
Combining these two facts shows that the presence of a boundary‑crossing 
primitive induces a quantitative lower bound on the overall KL divergence.

\begin{proposition}[Boundary‑crossing primitive implies a KL lower bound]
\label{prop:a4_primitive_to_kl}
If for some index pair $(i,j)$ we have
$|\Delta_{ij}(x)| \ge \gamma_t^{(k)} + \gamma_d^{(k)}$, then based on the quadratic expansion in Lemma~\ref{lem:a1_second_order},
$D_{\mathrm{KL}}(P_t\Vert P_d)
\ \gtrsim\
\alpha_{ij}\bigl(\gamma_t^{(k)} + \gamma_d^{(k)}\bigr)^{2}$, where $\alpha_{ij}=\tfrac14P_d(i)P_d(j)$.
\end{proposition}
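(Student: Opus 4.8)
The plan is to lower-bound the full Fisher-weighted quadratic sum in Corollary~\ref{cor:a2_kl_quadratic_aggregation} by a single one of its nonnegative terms. Concretely, by Corollary~\ref{cor:a2_kl_quadratic_aggregation} we have $D_{\mathrm{KL}}(P_t\Vert P_d)\approx\sum_{a,b}\alpha_{ab}\Delta_{ab}(x)^2$ with $\alpha_{ab}=\tfrac14 P_d(a)P_d(b)\ge 0$. Since every summand is nonnegative, dropping all terms except the one indexed by the given pair $(i,j)$ gives $D_{\mathrm{KL}}(P_t\Vert P_d)\gtrsim\alpha_{ij}\Delta_{ij}(x)^2$. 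Then I would apply the hypothesis $|\Delta_{ij}(x)|\ge\gamma_t^{(k)}+\gamma_d^{(k)}$, square it (monotonicity of $t\mapsto t^2$ on $[0,\infty)$, using that both margins are nonnegative so the right-hand side is $\ge 0$), to get $\Delta_{ij}(x)^2\ge(\gamma_t^{(k)}+\gamma_d^{(k)})^2$, and conclude $D_{\mathrm{KL}}(P_t\Vert P_d)\gtrsim\alpha_{ij}(\gamma_t^{(k)}+\gamma_d^{(k)})^2$.

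The steps, in order: (1) invoke Corollary~\ref{cor:a2_kl_quadratic_aggregation} to write the KL as the quadratic aggregation $\sum_{a,b}\alpha_{ab}\Delta_{ab}(x)^2$; (2) note $\alpha_{ab}\ge 0$ for all pairs since $P_d$ is a probability vector, hence the sum is bounded below by its $(i,j)$ term; (3) substitute the boundary-crossing bound $|\Delta_{ij}(x)|\ge\gamma_t^{(k)}+\gamma_d^{(k)}$ from Proposition~\ref{prop:a3_topk_mismatch_boundary_crossing} and square both sides; (4) combine to obtain the claimed inequality, with the $\gtrsim$ absorbing the $O(\|\delta\|^3)$ remainder from the second-order expansion in Lemma~\ref{lem:a1_second_order}.

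The main obstacle is not any hard computation but making the $\gtrsim$ rigorous: the identity $D_{\mathrm{KL}}=\sum\alpha_{ab}\Delta_{ab}^2$ holds only up to the $O(\|\delta\|^3)$ cubic error, so one must argue that in the regime where speculative sampling keeps $\|\delta\|$ small (as assumed in Lemma~\ref{lem:a1_second_order}), the quadratic term dominates and the stated bound holds up to a multiplicative constant close to one, or equivalently that $D_{\mathrm{KL}}(P_t\Vert P_d)\ge\alpha_{ij}(\gamma_t^{(k)}+\gamma_d^{(k)})^2 - O(\|\delta\|^3)$. I would therefore state the conclusion in the same asymptotic ($\gtrsim$) form as the preceding results, emphasizing that the remainder is negligible precisely in the accepted-prefix regime where large discrepancies have already been filtered out, so the single-term lower bound faithfully captures the dominant behavior of the divergence.
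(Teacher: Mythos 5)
Your proposal is correct and follows essentially the same route as the paper's proof: invoke Corollary~\ref{cor:a2_kl_quadratic_aggregation}, drop the non-negative terms other than the $(i,j)$ pair, and square the boundary-crossing bound $|\Delta_{ij}(x)|\ge\gamma_t^{(k)}+\gamma_d^{(k)}$. Your added remark about the $\gtrsim$ absorbing the $O(\|\delta\|^3)$ remainder in the accepted-prefix regime is a point the paper leaves implicit, so it only strengthens the argument.
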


\begin{proof}
From Corollary~\ref{cor:a2_kl_quadratic_aggregation}, dropping the non-negative terms for pairs other than $(i,j)$ yields $D_{\mathrm{KL}}(P_t\Vert P_d) \approx \sum_{u,v} \alpha_{uv}\Delta_{uv}(x)^2 \ge \alpha_{ij}\Delta_{ij}(x)^2 \ge \alpha_{ij}(\gamma_t^{(k)} + \gamma_d^{(k)})^2$.
\end{proof}
This yields a lower bound on the KL divergence,
showing that any top‑$k$ inconsistency enforces 
a separation between the two 
predictive distributions.
% ---------------------------------------------------------

\subsubsection{Why a linear classifier trained on ``importance'' aligns with KL screening}
\label{sec:a5_linear_classifier_alignment}

The subsequent lemma writes $\Delta_{ij}(x)$ explicitly as an affine function of $x$.

\begin{lemma}[Affine form of the primitives]
\label{lem:a5_primitives_affine}
For any $i,j\in\{1,\dots,V\}$ there exist $a_{ij}\in\mathbb{R}^{2d}$ and
$\kappa_{ij}\in\mathbb{R}$ such that $\Delta_{ij}(x)=a_{ij}^\top x+\kappa_{ij}$.
\end{lemma}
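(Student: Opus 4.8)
The plan is to unpack the definitions already in place and read off the affine coefficients directly. Recall from the setup that $z_t = W_t h_t + b_t$ and $z_d = W_d h_d + b_d$, so each logit coordinate is itself an affine function of the relevant hidden state: $z_t(i) = (W_t)_{i,:} h_t + (b_t)_i$ and $z_d(i) = (W_d)_{i,:} h_d + (b_d)_i$, where $(W)_{i,:}$ denotes the $i$-th row of $W$. Substituting these into the definition $\Delta_{ij}(x) = (z_t(i)-z_t(j)) - (z_d(i)-z_d(j))$ gives an explicit expression that is manifestly linear in $h_t$ and $h_d$ separately, hence linear in the concatenation $x = [h_t; h_d]$, plus a constant collecting the bias terms.

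Concretely, I would define $a_{ij} := \big[\, (W_t)_{i,:} - (W_t)_{j,:} \,;\, -(W_d)_{i,:} + (W_d)_{j,:} \,\big] \in \mathbb{R}^{2d}$, i.e. the top $d$ block is the difference of rows $i$ and $j$ of $W_t$ and the bottom $d$ block is the negated difference of rows $i$ and $j$ of $W_d$; and $\kappa_{ij} := (b_t)_i - (b_t)_j - (b_d)_i + (b_d)_j \in \mathbb{R}$. Then a one-line computation, grouping the $h_t$-terms and the $h_d$-terms, verifies $a_{ij}^\top x + \kappa_{ij} = (z_t(i)-z_t(j)) - (z_d(i)-z_d(j)) = \Delta_{ij}(x)$. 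Alternatively, and even more compactly, one can cite the already-established identity $\delta = Mx + c$ with $M = [W_t, -W_d]$ and $c = b_t - b_d$: since $\Delta_{ij}(x) = \delta_i - \delta_j = (Mx+c)_i - (Mx+c)_j = (e_i - e_j)^\top M x + (e_i-e_j)^\top c$, we may simply take $a_{ij} := M^\top(e_i - e_j)$ and $\kappa_{ij} := (e_i - e_j)^\top c$, where $e_i$ is the $i$-th standard basis vector in $\mathbb{R}^V$.

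There is essentially no obstacle here: the lemma is a bookkeeping statement asserting that the pairwise primitives inherit affineness from the affine logit maps, and the only care required is index/transpose bookkeeping when writing $a_{ij}$ as a block vector so that $a_{ij}^\top x$ correctly pairs the top block with $h_t$ and the bottom block with $h_d$. I would present the $M^\top(e_i-e_j)$ form as the main derivation for brevity, then optionally note the explicit row-difference form for concreteness, and close by remarking that this affine representation is exactly what makes each $\Delta_{ij}$ a valid linear feature for the classifier — setting up the final alignment argument in which the learned linear decision surface is expressed as a linear superposition of these same $\Delta_{ij}$, while KL (via Corollary~\ref{cor:a2_kl_quadratic_aggregation}) is a quadratic one.
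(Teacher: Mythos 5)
Your proposal is correct and follows essentially the same route as the paper: unpack the affine logit maps, take the $(e_i-e_j)$-contrast, and read off $a_{ij}=M^\top(e_i-e_j)=[\,W_t^\top(e_i-e_j);\,-W_d^\top(e_i-e_j)\,]$ and $\kappa_{ij}=(e_i-e_j)^\top(b_t-b_d)$, which is exactly the paper's construction (your row-difference form is just the componentwise rewriting of it).
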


\begin{proof}
Let $e_i\in\mathbb{R}^V$ denote the $i$-th standard basis vector. By definition, $z_t=W_t h_t+b_t$ and $z_d=W_d h_d+b_d$, hence $z_t(i)=e_i^\top(W_t h_t+b_t)$ and $z_d(i)=e_i^\top(W_d h_d+b_d)$. Therefore
$z_t(i)-z_t(j)=(e_i-e_j)^\top(W_t h_t+b_t)$ and $z_d(i)-z_d(j)=(e_i-e_j)^\top(W_d h_d+b_d)$.
Subtracting the two identities gives $\Delta_{ij}(x)=(e_i-e_j)^\top W_t h_t-(e_i-e_j)^\top W_d h_d+(e_i-e_j)^\top(b_t-b_d)$. Define $a_{ij}:=\bigl[\,W_t^\top(e_i-e_j);\,-W_d^\top(e_i-e_j)\,\bigr]$
and $\kappa_{ij}:=(e_i-e_j)^\top(b_t-b_d)$. Using $x=[h_t;h_d]$ yields $\Delta_{ij}(x)=a_{ij}^\top x+\kappa_{ij}$.
\end{proof}

\begin{proposition}[Representation of the classifier in the primitive basis]
\label{prop:a5_classifier_primitive_linear}
Let $\mathrm{Cls}(x)=\sigma(w^\top x+b)$ be a linear classifier trained on the concatenated representation $x$.
In LLMs, the vocabulary size $V$ is typically much larger than the hidden dimension $d$ ($V \gg d$).
Consequently, the set of primitive direction vectors $\{\,a_{ij}\mid i,j=1,\dots,V\,\}$ forms an overcomplete frame that effectively spans the feature space $\mathbb{R}^{2d}$.
The classifier’s weight vector $w$ can therefore be expressed as a linear combination of primitives $w = \sum_{i=1}^V \sum_{j=1}^V \beta_{ij}\, a_{ij}$ for a set of coefficients $\{\beta_{ij}\}$. Accordingly, the classifier is equivalent to $\mathrm{Cls}(x)=\sigma\!\left(\sum_{i=1}^V \sum_{j=1}^V \beta_{ij}\,\Delta_{ij}(x)+b'\right)$, for an adjusted bias term $b'$. This representation indicates that the decision surface can be well characterized by a linear combination of the primitives $\Delta_{ij}(x)$.
\end{proposition}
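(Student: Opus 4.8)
The plan is to prove Proposition~\ref{prop:a5_classifier_primitive_linear} in two stages: first establish that the primitive directions $\{a_{ij}\}$ span $\mathbb{R}^{2d}$, then use this to rewrite the classifier. For the spanning claim, I would argue as follows. Recall from Lemma~\ref{lem:a5_primitives_affine} that $a_{ij}=[\,W_t^\top(e_i-e_j);\,-W_d^\top(e_i-e_j)\,]$. Fixing a reference index, say $j=V$, the vectors $e_i-e_V$ for $i=1,\dots,V-1$ are linearly independent and span the hyperplane $\{v\in\mathbb{R}^V:\mathbf{1}^\top v=0\}$ of dimension $V-1$. Hence $\mathrm{span}\{a_{iV}\}$ equals $\{[\,W_t^\top v;\,-W_d^\top v\,]:v\perp\mathbf{1}\}$. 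Generically (and under the mild full-rank assumption implicit in $V\gg d$), the linear map $v\mapsto[\,W_t^\top v;\,-W_d^\top v\,]$ restricted to the $(V-1)$-dimensional subspace $\mathbf{1}^\perp$ is surjective onto $\mathbb{R}^{2d}$, because its image has dimension $\min(V-1,\,2d)=2d$. Therefore the full family $\{a_{ij}:i,j=1,\dots,V\}$ already spans $\mathbb{R}^{2d}$.

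Given the spanning property, the weight vector $w\in\mathbb{R}^{2d}$ admits a (non-unique, since the frame is overcomplete) representation $w=\sum_{i,j}\beta_{ij}\,a_{ij}$ for some coefficients $\{\beta_{ij}\}$; one concrete choice is the minimum-norm solution obtained via the Moore–Penrose pseudoinverse of the matrix whose columns are the $a_{ij}$. Substituting this into the classifier and invoking Lemma~\ref{lem:a5_primitives_affine}, we get
\begin{equation}
w^\top x+b=\sum_{i,j}\beta_{ij}\,a_{ij}^\top x+b=\sum_{i,j}\beta_{ij}\bigl(\Delta_{ij}(x)-\kappa_{ij}\bigr)+b=\sum_{i,j}\beta_{ij}\,\Delta_{ij}(x)+b',
\end{equation}
where $b':=b-\sum_{i,j}\beta_{ij}\kappa_{ij}$ absorbs the constant offsets. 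Applying $\sigma$ to both sides yields $\mathrm{Cls}(x)=\sigma\!\bigl(\sum_{i,j}\beta_{ij}\,\Delta_{ij}(x)+b'\bigr)$, which is the claimed representation. This exhibits the classifier's decision surface as a linear superposition of exactly the same primitive basis $\Delta_{ij}(x)$ that Corollary~\ref{cor:a2_kl_quadratic_aggregation} shows the KL divergence aggregates quadratically, completing the structural correspondence.

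The main obstacle is the spanning step: strictly speaking, surjectivity of $v\mapsto[\,W_t^\top v;\,-W_d^\top v\,]$ on $\mathbf{1}^\perp$ is a genericity/rank condition on $(W_t,W_d)$, not an automatic consequence of $V\gg d$ alone — one could contrive pathological weight matrices whose combined row space is deficient. I would handle this by stating it as a mild regularity assumption (satisfied for essentially all trained language-model heads, and verifiable in practice), and noting that even without exact spanning, projecting $w$ onto $\mathrm{span}\{a_{ij}\}$ gives the best primitive-basis approximation of the decision surface, which is all the ``well characterized by'' phrasing in the statement requires. A secondary subtlety is that the representation is far from unique due to overcompleteness, but since the proposition only asserts existence of \emph{a} coefficient set $\{\beta_{ij}\}$, uniqueness is not needed and the pseudoinverse construction suffices.
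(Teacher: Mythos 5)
Your proposal is correct and follows essentially the same route as the paper: expand $w$ in the (overcomplete) primitive directions $a_{ij}$, substitute $a_{ij}^\top x=\Delta_{ij}(x)-\kappa_{ij}$ from Lemma~\ref{lem:a5_primitives_affine}, and absorb the constants into $b'$. The only difference is that you additionally justify the spanning step (via the $e_i-e_V$ basis of $\mathbf{1}^\perp$ and a genericity/rank condition on $[W_t;\,-W_d]$, with a pseudoinverse choice of $\{\beta_{ij}\}$), whereas the paper simply takes the ``effectively spans'' property as given in the proposition statement; flagging that this is a mild regularity assumption rather than an automatic consequence of $V\gg d$ is a fair and accurate refinement.
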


\begin{proof}
Because the set $\{\,a_{ij}\mid i,j=1,\dots,V\,\}$ effectively spans the feature space $\mathbb{R}^{2d}$,
the classifier weight vector $w$ admits the expansion $w = \sum_{i=1}^V \sum_{j=1}^V \beta_{ij}a_{ij}$. Substituting the affine form of each primitive from Lemma~\ref{lem:a5_primitives_affine}, we obtain
\begin{align}
w^\top x 
&= \Bigl(\sum_{i=1}^V \sum_{j=1}^V \beta_{ij}a_{ij}\Bigr)^\top x 
= \sum_{i=1}^V \sum_{j=1}^V \beta_{ij}(a_{ij}^\top x) \nonumber\\
&= \sum_{i=1}^V \sum_{j=1}^V \beta_{ij}\bigl(\Delta_{ij}(x) - \kappa_{ij}\bigr).
\end{align}
By defining the bias
$b' := b - \sum_{i=1}^V \sum_{j=1}^V \beta_{ij}\kappa_{ij}$,
the classifier can be rewritten in terms of $\Delta_{ij}(x)$ as $
w^\top x + b = \sum_{i=1}^V \sum_{j=1}^V \beta_{ij}\,\Delta_{ij}(x) + b'$, which yields $\mathrm{Cls}(x)=$$\sigma\!\left(\sum_{i=1}^V \sum_{j=1}^V \beta_{ij}\,\Delta_{ij}(x)+b'\right)$.
\end{proof}

Corollary~\ref{cor:a2_kl_quadratic_aggregation} provides the second-order primitive
expansion of the KL divergence. Together with Proposition~\ref{prop:a5_classifier_primitive_linear},
this yields the following formal juxtaposition in the primitive coordinate system
\[
\left\{
\begin{aligned}
&D_{\mathrm{KL}}(P_t\Vert P_d)\approx
\sum_{i=1}^V \sum_{j=1}^V \tfrac14\,P_d(i)P_d(j)\Delta_{ij}(x)^2,
\\[6pt]
&\mathrm{Cls}(x)
\ \equiv\
\sigma\!\left(\sum_{i=1}^V \sum_{j=1}^V \beta_{ij}\,\Delta_{ij}(x)+b'\right),
\end{aligned}
\right.
\]
The first line corresponds to a weighted quadratic aggregation of the primitives~$\Delta_{ij}(x)$,
whereas the second line represents a linear aggregation over the same primitive basis,
demonstrating that the classifier operates directly on the fundamental components of the KL divergence. Consequently, the structural correspondence asserted in Theorem~\ref{ssy1122:structural_correspondence} is formally established.

\begin{figure}[h]
    \centering
    \includegraphics[width=1.0\linewidth]{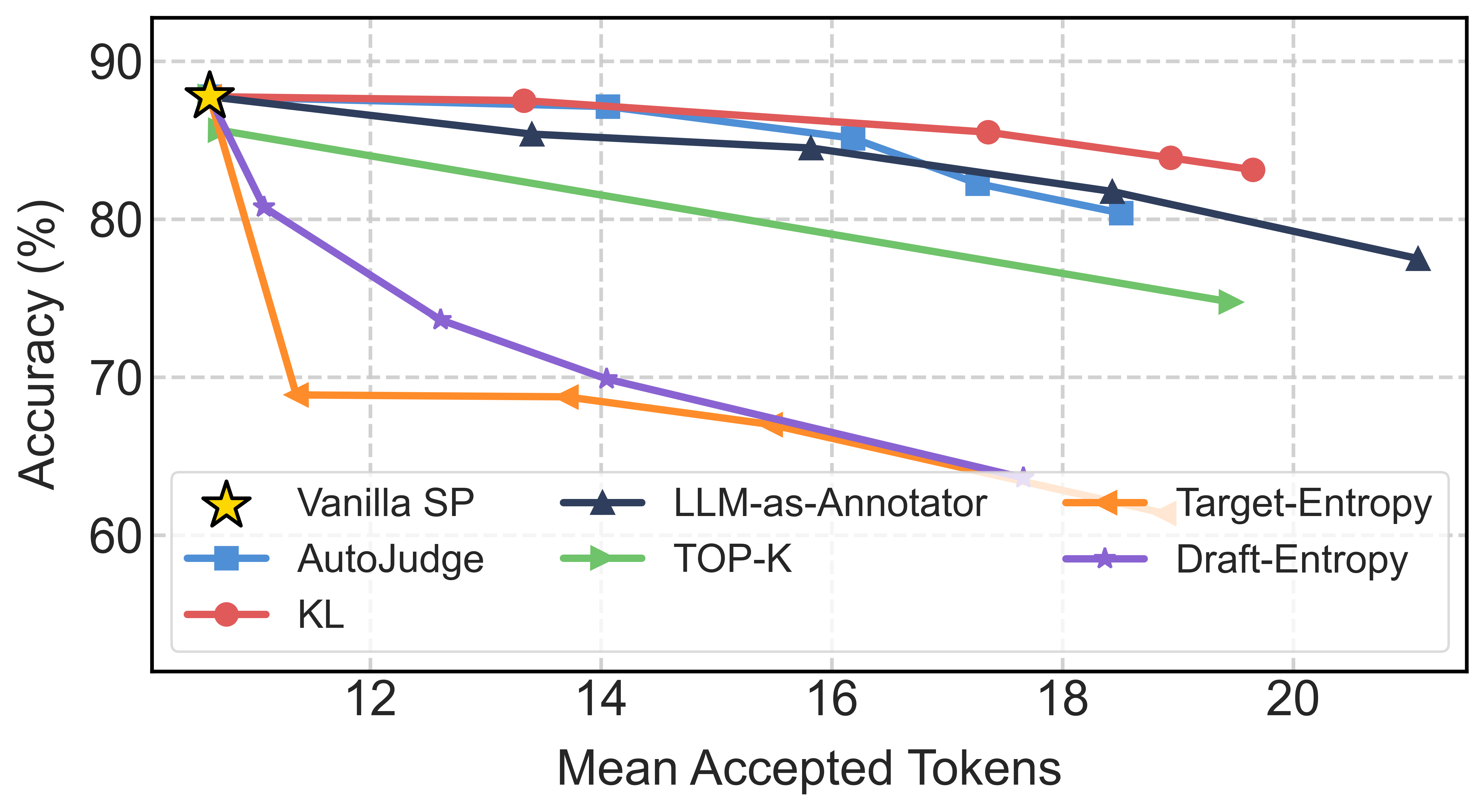}
    \caption{Accuracy and MAT on GSM8K for Llama-3.2-1B-Instruct and Llama-3.1-8B-Instruct (with Qwen3-Max as the annotator).}
    \label{ssy1122:performace_qwen3max}
\end{figure}

\begin{figure*}[!t]
  \centering
  \begin{tcolorbox}[aclprompt, title={Prompt for Critical-Token Annotation}]
\ttfamily\footnotesize
\textbf{[System Prompt]} 
\begin{lstlisting}[style=promptstyle]
    You are an expert annotator.
\end{lstlisting}
\textbf{[User Prompt]}
\begin{lstlisting}[style=promptstyle]
    You are an expert annotator. Your task is to identify the KEY REASONING ERROR tokens that 
    cause the wrong answer to be incorrect.

    INPUT:
        Question: [to be inserted]
        Correct Answer: [to be inserted]
        Wrong Answer: [to be inserted]

    INSTRUCTIONS:
        1. Identify WHERE the logical/reasoning ERROR occurs (not just textual differences).
        2. Mark ONLY the specific tokens that introduce the error.
        3. There may be MULTIPLE error locations - mark each separately.
        4. Ignore stylistic differences - focus on semantic/logical mistakes.
        5. An error is a token that causes incorrect reasoning, wrong calculation, or false 
        conclusion.

    OUTPUT FORMAT (STRICT):
        ANNOTATED_ANSWER: [exact wrong answer text with markers inserted]
    
    Use markers:
        - [ERROR_START] before the first token of each error
        - [ERROR_END] after the last token of each error
        - Keep ALL original text and spacing
        - Multiple errors = multiple [ERROR_START]...[ERROR_END] pairs
    
    EXAMPLE 1:
        Question: What is 5 + 3?
        Correct: 5 + 3 = 8
        Wrong: 5 + 3 = 9
        Output: ANNOTATED_ANSWER: 5 + 3 = [ERROR_START]9[ERROR_END]
    
    EXAMPLE 2:
        Question: If I hang 5 shirts outside and it takes them 5 hours to dry, how long would 
        it take to dry 30 shirts?
        Correct: It would still take 5 hours to dry 30 shirts, as drying is parallel.
        Wrong: It would take 30 hours to dry 30 shirts, as each shirt needs an additional hour 
        to dry.
        Output: ANNOTATED_ANSWER: It would take [ERROR_START]30 hours to dry 30 shirts, as each 
        shirt needs an additional hour to dry.[ERROR_END]
    
    EXAMPLE 3:
        Question: What is the capital of France and Germany?
        Correct: Paris and Berlin.
        Wrong: Paris is the capital of France and Munich is the capital of Germany.
        Output: ANNOTATED_ANSWER: Paris is the capital of France and [ERROR_START]Munich[ERROR_
        END] is the capital of Germany.
    
    Now process the input and output ONLY the ANNOTATED_ANSWER with error markers.
\end{lstlisting}
  \end{tcolorbox}
  \caption{Prompt for critical-token annotation, using Qwen3-Max as the LLM annotator.}
  \label{ssy1122:annot_prompt}
\end{figure*}

\begin{figure*}[!t]
\centering
\begin{tcolorbox}[aclprompt,title={GSM8K Example (Question ID: 5)},width=\textwidth]
\begin{lstlisting}[style=promptstyle]
(*@\textbf{[Question]}@*)
    James writes a 3-page letter to 2 different friends twice a week.  How many pages does he 
    write a year?

(*@\textbf{[Correct Answer]}@*)
    To find the number of pages James writes a year, we need to calculate the number of pages 
    he writes per week and then multiply that by the number of weeks in a year.

    James writes a 3-page letter to 2 different friends twice a week. So, he writes 2 x 2 = 4 
    letters per week.

    Since there are 3 pages in each letter, James writes 4 x 3 = 12 pages per week.

    There are 52 weeks in a year. So, James writes 12 x 52 = 624 pages per year.

    The final answer is 624.

(*@\textbf{[Annotated Wrong Answer]}@*)
    To find out how many pages James writes a year, we need to calculate the number of pages 
    he writes per week and then multiply it by the number of weeks in a year.

    James writes a 3-page letter to 2 different friends twice a week. This means he writes (*@\textcolor{red}{\textbf{[ER}}@*)
    (*@\textcolor{red}{\textbf{ROR\_START]}}@*)2 * 3 = 6 pages per week.(*@\textcolor{red}{\textbf{[ERROR\_END]}}@*)

    Since there are 52 weeks in a year, James writes 6 * 52 = 312 pages per year.

    The final answer is 312.
\end{lstlisting}
\end{tcolorbox}
\caption{Example of Qwen3-Max annotation on the GSM8K (Question ID: 5), with \textcolor{red}{\textbf{[ERROR$\_$START]}} and \textcolor{red}{\textbf{[ERROR$\_$END]}} marking critical tokens in the draft model’s erroneous solution.}
\label{ssy1122:labled_example_id5}
\end{figure*}

\subsection{Additional Experiment Details}
\label{ssy1122:additional_exp_details}
We conduct experiments on four benchmarks: GSM8K and MATH-500-Hard for math reasoning, MMLU-Pro for broad-coverage knowledge and reasoning, and LiveCodeBench for code generation. GSM8K consists of grade-school math word problems that require multi-step numerical reasoning. We additionally evaluate on MATH-500-Hard, which we construct by selecting all Level-5 problems from MATH-500, to introduce a math distribution that differs from GSM8K and is substantially more challenging. MMLU-Pro is a more difficult variant of MMLU spanning a wide range of subjects. LiveCodeBench focuses on algorithmic code generation with execution-based evaluation. To manage end-to-end decoding cost, we run evaluation on held-out subsets. For GSM8K, we evaluate on 400 examples from the test split, while AutoJudge training uses the full GSM8K training split. For MMLU-Pro, we evaluate on 100 examples. For LiveCodeBench, we partition the dataset into five folds, use four folds to train the AutoJudge classifier, and reserve the remaining fold as the shared test set for all methods. All experiments are run on NVIDIA A6000, L40, and V100 GPUs. For fair and consistent throughput reporting, all vLLM speed measurements are conducted on 8$\times$V100.

\subsection{Potential of LLM Annotation}
\label{ssy1122:appendix_llm_annotation}

Given the limitations of labor-intensive manual annotation and the high computational overhead of heuristic mining, a natural alternative is to leverage an LLM as a surrogate annotator to provide supervision for judge training. In this appendix, we explore the feasibility of using Qwen3-Max\footnote{\url{https://qwen.ai/blog?id=qwen3-max}} as an automatic labeler to identify logic-pivoting tokens in erroneous generations. We construct an annotation set from the GSM8K training split by first filtering for instances where the target model produces a correct final answer while the draft model produces an incorrect one. From this subset, we then sample 2,000 examples. For each problem, we provide Qwen3-Max with the problem statement, the draft model’s erroneous output, and the target model’s verified correct answer, prompting it to highlight the specific words or tokens in the draft output that lead to the incorrect final answer (i.e., error-triggering or logic-pivoting tokens). The detailed prompt is shown in Figure \ref{ssy1122:annot_prompt}. We then use these token-level annotations to train a lightweight linear classifier following the same training methodology as AutoJudge. In total, the annotation cost is approximately 2M tokens (about \$10).\\
\indent The results in Figure \ref{ssy1122:performace_qwen3max} indicate that the judge trained on Qwen3-Max-labeled data consistently underperforms the judge trained with AutoJudge-mined labels in the low-performance-loss regime (upper-left region of the curve). In other words, although LLM annotation is substantially cheaper than counterfactual rollout-based mining, its supervision does not yield a more accurate token accept/reject rule when decoding requires high precision (i.e., minimal quality degradation). This gap suggests a key limitation of LLM-based annotation in our setting. This issue is also mentioned in \citet{DBLP:conf/iclr/BachmannAnagnostidis25}: using LLMs for this purpose proved to be too imprecise. Qwen3-Max may highlight tokens that look suspicious from a general semantic perspective, but these tokens are not always the ones that flip the final outcome of the generation. As a result, the labels can be informative but less well aligned with what judge decoding needs, especially in the high-precision regime where a small amount of label noise can noticeably worsen the quality--speed trade-off. Please refer to Figures \ref{ssy1122:labled_example_id5}-\ref{ssy1122:labled_example_id68} for annotation examples.

\begin{figure*}[!t]
\centering
\begin{tcolorbox}[aclprompt,title={GSM8K Example (Question ID: 68)},width=\textwidth]
\begin{lstlisting}[style=promptstyle]
(*@\textbf{[Question]}@*)
    Jesse and Mia are competing in a week long race. They have one week to run 30 miles. On
    the first three days Jesse averages (2/3) of a mile. On day four she runs 10 miles. Mia
    averages 3 miles a day over the first 4 days. What is the average of their average that
    they have to run over the final three days?

(*@\textbf{[Correct Answer]}@*)
    To find the average of their average that they have to run over the final three days, we
    need to first find their average for the entire race and then subtract the average they
    have to run over the first four days.

    Jesse's average for the first three days is (2/3) miles per day. So, over the first three 
    days, she runs a total of (2/3) * 3 = 2 miles.

    On day four, Jesse runs 10 miles. So, her total distance for the first four days is 2 + 1
    0 = 12 miles.

    Mia's average for the first four days is 3 miles per day. So, over the first four days,
    she runs a total of 3 * 4 = 12 miles.

    The total distance they have to run over the entire week is 30 miles. So, the average di-
    stance they have to run over the final three days is 30 - 12 = 18 miles.

    The final answer is 6.

(*@\textbf{[Annotated Wrong Answer]}@*)
    To find the average distance Jesse and Mia need to run over the final three days, we first 
    need to calculate their total distances for the first four days.

    Jesse's total distance for the first four days:
    - Day 1: (2/3) miles
    - Day 2: (2/3) miles
    - Day 3: (2/3) miles
    - Day 4: 10 miles
    Total distance for the first four days = (2/3) + (2/3) + (2/3) + 10 = (*@\textcolor{red}{\textbf{[ERROR\_START]}}@*)8 + 10 
    = 18 miles(*@\textcolor{red}{\textbf{[ERROR\_END]}}@*)

    Mia's total distance for the first four days:
    - Day 1: 3 miles
    - Day 2: 3 miles
    - Day 3: 3 miles
    - Day 4: (*@\textcolor{red}{\textbf{[ERROR\_START]}}@*)10 miles(*@\textcolor{red}{\textbf{[ERROR\_END]}}@*)
    Total distance for the first four days = 3 + 3 + 3 + 10 = (*@\textcolor{red}{\textbf{[ERROR\_START]}}@*)19 miles(*@\textcolor{red}{\textbf{[ERROR\_END]}}@*)
    
    Now, we need to find the average distance each of them needs to run over the final 3 days. 

    Jesse's average distance for the final three days:
    Total distance for the first four days = (*@\textcolor{red}{\textbf{[ERROR\_START]}}@*)18 miles(*@\textcolor{red}{\textbf{[ERROR\_END]}}@*)
    Number of days = 3
    Mean distance = Total distance / Number of days = (*@\textcolor{red}{\textbf{[ERROR\_START]}}@*)18 / 3 = 6 miles(*@\textcolor{red}{\textbf{[ERROR\_
    END]}}@*)
    
    Mia's average distance for the final three days:
    Total distance for the first four days = (*@\textcolor{red}{\textbf{[ERROR\_START]}}@*)19 miles(*@\textcolor{red}{\textbf{[ERROR\_END]}}@*)
    Number of days = 3
    Average distance = Total distance / Number of days = (*@\textcolor{red}{\textbf{[ERROR\_START]}}@*)19 / 3 = 6.33 miles(*@\textcolor{red}{\textbf{[ERR}}@*)
    (*@\textcolor{red}{\textbf{OR\_END]}}@*)
    
    The final answer is: $\boxed{6.33}$
    
\end{lstlisting}
\end{tcolorbox}
\caption{Example of Qwen3-Max annotation on the GSM8K (Question ID: 68), with \textcolor{red}{\textbf{[ERROR$\_$START]}} and \textcolor{red}{\textbf{[ERROR$\_$END]}} marking critical tokens in the draft model’s erroneous solution.}
\label{ssy1122:labled_example_id68}
\end{figure*}

\end{document}